\documentclass[journal]{IEEEtran}

\usepackage{cite}
\usepackage{amsmath}
\usepackage{amssymb}
\usepackage{balance}
\usepackage{graphicx}
\usepackage{subcaption}
\usepackage{epstopdf}
\usepackage{algpseudocode}
\usepackage[usenames]{color}
\usepackage{algorithmicx,algorithm}
\usepackage{amsthm}
\usepackage{xcolor}

\DeclareMathOperator*{\argmax}{arg\,max}
\DeclareMathOperator*{\argmin}{arg\,min}
\DeclareMathOperator*{\arginf}{arg\,inf}
\theoremstyle{plain}
\newtheorem{lemma}{Lemma}
\newtheorem{theorem}{Theorem}

\theoremstyle{definition}
\newtheorem{definition}{Definition}
\newtheorem{assumption}{Assumption}
\theoremstyle{remark}
\newtheorem{remark}{Remark}

\begin{document}

\title{ByRDiE: Byzantine-resilient Distributed Coordinate Descent for Decentralized Learning}

\author{Zhixiong Yang,~\IEEEmembership{Graduate~Student~Member,~IEEE}, and Waheed U. Bajwa,~\IEEEmembership{Senior~Member,~IEEE}% <-this % stops a space
\thanks{Some of the results reported in this paper were presented at the 2018 IEEE Data Science Workshop (DSW'18)~\cite{YangBajwa.ConfDSW18}. This work is supported in part by the NSF under award CCF-1453073, by the ARO under award W911NF-17-1-0546, and by the DARPA Lagrange Program under ONR/SPAWAR contract N660011824020.}
\thanks{The authors are with the Department of Electrical and Computer Engineering, Rutgers University--New Brunswick, 94 Brett Rd, Piscataway, NJ 08854, USA (Emails: {\tt\{zhixiong.yang,waheed.bajwa\}@rutgers.edu}).}% <-this % stops a space
}

\maketitle

\begin{abstract}
Distributed machine learning algorithms enable learning of models from datasets that are distributed over a network without gathering the data at a centralized location. While efficient distributed algorithms have been developed under the assumption of faultless networks, failures that can render these algorithms nonfunctional occur frequently in the real world. This paper focuses on the problem of Byzantine failures, which are the hardest to safeguard against in distributed algorithms. While Byzantine fault tolerance has a rich history, existing work does not translate into efficient and practical algorithms for high-dimensional learning in \emph{fully} distributed (also known as decentralized) settings. In this paper, an algorithm termed \emph{Byzantine-resilient distributed coordinate descent} (ByRDiE) is developed and analyzed that enables distributed learning in the presence of Byzantine failures. Theoretical analysis (convex settings) and numerical experiments (convex and nonconvex settings) highlight its usefulness for high-dimensional distributed learning in the presence of Byzantine failures.
\end{abstract}

\begin{IEEEkeywords}
Byzantine failure, consensus, coordinate descent, decentralized learning, distributed optimization, empirical risk minimization, machine learning
\end{IEEEkeywords}

\section{Introduction}\label{section introduction}
One of the fundamental goals in machine learning is to learn a model that minimizes the statistical risk. This is typically accomplished through stochastic optimization techniques, with the underlying principle referred to as \emph{empirical risk minimization} (ERM)~\cite{Vapnik.ConfNIPS92,Vapnik2013nature,mohri2012foundations}. The ERM principle involves the use of a training dataset and tools from optimization theory. Traditionally, training data have been assumed available at a centralized location. Many recent applications of machine learning, however, involve the use of a dataset that is either distributed across different locations (e.g., the \emph{Internet of Things}) or that cannot be processed at a single machine due to its size (e.g., social network data). The ERM framework in this setting of distributed training data is often referred to as \emph{decentralized} or \emph{distributed} learning~\cite{predd2006distributed,boyd2011distributed}.

While there exist excellent works that solve the problem of distributed learning, all these works make a simplified assumption that all nodes in the network function as expected. Unfortunately, this assumption does not always hold true in practice; examples include cyber attacks, malfunctioning equipments and undetected failures~\cite{Driscoll2003byzantine,driscoll2004the}. When a node arbitrarily deviates from its intended behavior, it is termed to have undergone Byzantine failure~\cite{lamport1982byzantine}. While Byzantine failures are hard to detect in general, they can easily jeopardize the operation of the whole network~\cite{fischer1985impossibility,dutta2005best,sousa2012byzantine}.

In particular, with just a simple strategy, one can show that a single Byzantine node in the network can lead to failures of most state-of-the-art distributed learning algorithms~\cite{su2015fault,su2016fault,yang2016rdsvm}. The main contribution of this paper is to introduce and analyze an algorithm that completes the distributed learning task in the presence of Byzantine failures in the network.

\subsection{Related Works}
To achieve the goal of distributed learning, one usually sets up a distributed optimization problem by defining and minimizing a (regularized) loss function on the training data of each node. The resulting problem can then be solved by distributed optimization algorithms. Several types of distributed algorithms have been introduced in the past~\cite{Nedic2009Distributed,Ram2010distributed,forero2010consensus,mota2013admm,shi2014on,nedic2015distributed,Mokhtari2016decentralized,RajaBajwa.ITSP16,Mokhtari2017network}. The most common of these are gradient-based methods~\cite{Nedic2009Distributed, Ram2010distributed, nedic2015distributed}, which usually have low local computational complexity. Another type includes augmented Lagrangian-based methods~\cite{mota2013admm,shi2014on,forero2010consensus}, which iteratively update the primal and dual variables. These methods often require the ability to locally solve an optimization subproblem at each node. A third type includes second-order distributed methods~\cite{Mokhtari2016decentralized,Mokhtari2017network}, which tend to have high computational and/or communications costs. While any of these distributed optimization algorithms can be used for distributed learning, they all make the assumption that there are no failures in the network.

Byzantine-resilient algorithms for a variety of problems have been studied extensively over the years~\cite{lamport1982byzantine,minsky2003tolerating,rawat2011collaborative,vempaty2013distributed,ChenACC18}.
Byzantine-resilient algorithms for scalar- and vector-averaging distributed consensus were studied in \cite{vaidya2012iterative,vaidya2012matrix,Leblanc2013resilient,vaidya2013byzantine,vaidya2014iterative}. The algorithms proposed in \cite{su2015fault,su2016fault} extend some of these works from scalar consensus to scalar-valued distributed optimization, but they cannot be used in a straightforward manner for vector-valued distributed optimization problems. In the context of distributed learning, \cite{yang2016rdsvm} introduces a method to implement distributed support vector machines (SVMs) under Byzantine failures, but it neither provides theoretical guarantees nor generalizes to other learning problems. A number of recent works have also investigated Byzantine-resilient distributed learning in networks that are equipped with a central processor (often referred to as a \emph{paramater server})~\cite{blanchard2017byzantine,chen2017distributed,blanchard2017machine,damaskinos2018asynchronous,mhamdi2018hidden,xie2018generalized,draco2018chen,yin2018byzantine,alistarh2018byzantine,su2018securing,yin2018defending}. Theoretical guarantees developed in such works make extensive use of the fact that the parameter server is connected to all network nodes; as such, these guarantees cannot be generalized to Byzantine-resilient learning in \emph{fully} distributed networks.

\subsection{Our Contributions}
We have already noted several limitations of works like \cite{vaidya2012matrix,vaidya2013byzantine,Leblanc2013resilient,su2015fault,su2016fault,yang2016rdsvm,blanchard2017byzantine,chen2017distributed,blanchard2017machine,damaskinos2018asynchronous,mhamdi2018hidden,xie2018generalized,draco2018chen,yin2018byzantine,alistarh2018byzantine,su2018securing,yin2018defending,vaidya2012iterative,vaidya2014iterative} within the context of Byzantine-resilient learning in fully distributed settings. Additionally, one of the limitations of existing Byzantine-resilient algorithms such as~\cite{vaidya2014iterative} is that, when required to work with vector-valued data, they make a strong assumption on the network topology. Specifically, the smallest size of neighborhood of each node in the vector setting depends linearly on the dimensionality of the problem. This is impractical for most learning problems since the dimensionality of the training samples is usually much larger than the size of the neighborhood of each node. Within the specific context of Byzantine-resilient distributed machine learning, the main limitation of fully distributed algorithms proposed in~\cite{su2015fault,su2016fault} is that they only yield the minimizer of a convex combination of local empirical risk functions for scalar-valued problems. Since this (scalar) minimizer is usually different from the minimizer of the exact average of local loss functions, these works cannot guarantee by themselves alone that the outputs of any vector-valued algorithms that leverage similar ideas will converge to either the minimum empirical risk or the minimum statistical risk.

In contrast to prior works, our work has two main contributions. First, we propose a Byzantine-resilient algorithm that scales well with the dimensionality of distributed learning problems. The proposed algorithm is an \emph{inexact} distributed variant of coordinate descent~\cite{wright2015coordinate} that first breaks vector-valued distributed learning problems into a (possibly infinite) sequence of scalar-valued distributed subproblems and then solves these subproblems in a Byzantine-resilient manner by leveraging ideas from works such as \cite{su2015fault,su2016fault}. The inexactness here stems from the fact that---even when using an exact line search procedure---the subproblems cannot be solved exactly in the presence of Byzantine failures (cf.~Sec.~\ref{section scalar case}). This inexactness in the solution of each subproblem, whether or not an exact line search procedure is used, is one of the reasons the analytical techniques of prior works such as \cite{su2015fault,su2016fault} do not lead to theoretical guarantees for the proposed coordinate descent algorithm. In contrast, under the assumption of independent and identically distributed training samples across the network, we provide theoretical guarantees that the output of the proposed algorithm will lead to the minimum statistical risk with high probability. Our theoretical analysis, which forms the second main contribution of this work, also highlights the fact that the output of our algorithm can statistically converge to the minimizer of the statistical risk faster than using only local information by a factor that will be shown in the sequel.

\subsection{Notation and Organization}
Given a vector $v$ and a matrix $A$, $[v]_k$ and $[A]_{ij}$ denote their $k$-th and $(i,j)$-th element, respectively. We use $v\vert_{[v]_k=a'}$ to denote the vector formed by replacing the $k$-th element of $v$ with $a' \in \mathbb{R}$, while $(\cdot)^T$ denotes the transpose operation. We use $\|v\|$ and $\|v\|_\infty$ to denote the $\ell_2$- and $\ell_\infty$-norms of $v$, respectively, $\mathbf{1}$ to denote the vector of all ones, and $I$ to denote the identity matrix. Given a vector $v$ and a constant $\gamma > 0$, we use $B(v,\gamma) := \{v^\prime: \|v-v^\prime\| \leq \gamma\}$ to denote an $\ell_2$-ball of radius $\gamma$ centered around $v$. Given a set, $\vert \cdot\vert$ denotes its cardinality. We use the scaling relation $f(n) = \mathcal{O}(g(n))$ if $\exists c_o > 0, n_o : \forall n \geq n_o, f(n) \leq c_o g(n)$. Finally, unless stated otherwise, we use $\nabla f(w,(x,y))$ to denote the gradient of a function $f(w,(x,y))$ with respect to $w$.

The rest of this paper is organized as follows. Section~\ref{section problem formulation} gives the problem formulation. Section~\ref{section scalar case} discusses the proposed algorithm along with theoretical guarantees for consensus as well as statistical and algorithmic convergence. Numerical results corresponding to distributed convex and nonconvex learning on two different datasets are provided in Section~\ref{section numerical}, while Section~\ref{section conclusion} concludes the paper.

\section{Problem Formulation}\label{section problem formulation}
Given a connected network in which nodes have access to local training data, our goal is to learn a machine learning model from the distributed data, even in the presence of Byzantine failures. We begin with a model of our basic problem, which will be followed with a model for Byzantine failures and the final problem statement.

Consider a connected network of $M$ nodes, expressed as a directed, static graph $\mathcal{G}(J,\mathcal{E})$.
Here, the set $J:=\{1,\dots,M\}$ represents nodes in the network, while the set of edges $\mathcal{E}$ represents communication links between different nodes. Specifically, $(j,i) \in \mathcal{E}$ if and only if node $i$ can receive information from node $j$. Each node $j$ has access only to a local training set $S_j=\lbrace (x_{jn},y_{jn})\rbrace_{n=1}^{\vert S_j\vert}$. Let $x\in\mathbb{R}^P$ represent the training features satisfying $\Vert x\Vert\leq B$ for some constant $B$ and $y$ be the corresponding label. For classification, $y\in\lbrace -1,1\rbrace$, while $y\in\mathbb{R}$ for regression. We assume that the training samples are independent and identically distributed (i.i.d.) and drawn from an unknown distribution $D$, i.e., $(x_{jn},y_{jn})\sim D$. For simplicity, we assume that the cardinalities of the local training sets are the same, i.e., $\vert S_j\vert=N$. The generalization to the case when $S_j$'s are not equal is trivial.

\begin{remark}
Note that while the main problem is being formulated here under the supervised setting, our proposed framework and the final results are equally applicable under both unsupervised and semi-supervised settings.
\end{remark}

In machine learning, one would ideally like to collect all the data into one set $S=\lbrace (x_n,y_n)\rbrace_{n=1}^{\vert S\vert}$ and perform centralized training on $S$. The goal in that case is to learn a function that reliably maps $x$ to $y$. One popular mapping is $y=w^Tx$ (sometimes this is defined as $y=w^Tx+b$, which can be transformed into $y=w^Tx$ by adding one more dimension to $x$). To find a ``good" $w$, one first defines a (non-negative) loss function $\ell(w,(x,y))$, where the value of loss function increases when the difference between the mapping of $x$ and $y$ increases. To avoid overfitting, a (non-negative) regularizer $R(w)$ is often added to the loss function. Then one can solve for $w$ by statistically minimizing a regularized loss function
\begin{align}
  f(w,(x,y)) := \ell(w,(x,y)) + R(w).
\end{align}
The regularized $f(w,(x,y))$ is often referred to as \emph{risk function}. In this paper, we focus on the class of convex differentiable loss functions and strictly convex and smooth regularizers.\footnote{A function $R(w)$ is strictly convex if it satisfies $R(zw_1+(1-z)w_2) < z R(w_1)+(1-z)R(w_2)$ for any $z \in (0,1)$. Further, $R(w)$ is smooth if it is differentiable for all orders.} Examples include square loss $(1-y\cdot w^Tx)^2$, square hinge loss $\max(0,1-y\cdot w^Tx)^2$, logistic loss $\ln(1+e^{-y\cdot w^Tx})$ and $R(w)=\frac{\lambda}{2}\Vert w\Vert^2$. We also assume the gradient of the loss function $\ell(w,(x,y))$ is $L$-Lipschitz continuous. Since $R(w)$ is smooth, we formally state the Lipschitz assumption as follows.
\begin{assumption}\label{assumption lipschitz}
The risk function $f=\ell(w,(x,y))+R(w)$ satisfies $$\forall w_1, w_2, \ \Vert \nabla f(w_1,(x,y))-\nabla f(w_2,(x,y))\Vert\leq L\Vert w_1-w_2\Vert.$$
\end{assumption}
\noindent Note that Assumption~\ref{assumption lipschitz} implies the risk function itself is also Lipschitz continuous~\cite{sohrab2003basic}:
\begin{align*}
  \forall w_1, w_2, \ \Vert f(w_1,(x,y))-f(w_2,(x,y))\Vert\leq L'\Vert w_1-w_2\Vert.
\end{align*}

Centralized machine learning focuses on learning the ``best'' mapping from $x$ to $y$ in terms of the following stochastic optimization problem (referred to as \emph{statistical risk minimization}):
\begin{align}\label{eqn:stochastic.problem}
  w^\ast := \argmin_{w \in \mathbb{R}^p} \mathbb{E}[f(w,(x,y))],
\end{align}
where the expectation is with respect to the unknown distribution $D$. In this work, we assume without loss of generality that $\|w^\ast\|_\infty \leq \gamma^\ast$ for some constant $\gamma^\ast \in \mathbb{R}_+$. Owing to the equivalence of norms in finite-dimensional spaces, extensions of our analysis to bounds on $w^\ast$ in norms other than the $\ell_\infty$-norm are straightforward, but would not be pursued in this work. In the following, we will use $W$ to denote the smallest $\ell_\infty$-ball centered around the origin that contains $w^*$ and the \emph{sublevel set} of the \emph{statistical risk} $\mathbb{E}[f(w,(x,y))]$ at level $c_0 := \mathbb{E}[f(0,(x,y))]$.\footnote{Our choice of the level $c_0$ is tied to our initialization of the proposed algorithm, which is currently taken to be $w = 0$. This initialization, however, is without loss of generality and can be changed according to one's preference, resulting in a minor change in the definition of the set $W$.} Specifically, defining $W_\gamma := \{w \in \mathbb{R}^p: \|w\|_\infty \leq \gamma\}$ and
\begin{align*}
  \Gamma := \arginf_\gamma \gamma \ \! \ \text{s.t.} \ \! \ W_\gamma \supset \{w: \mathbb{E}[f(w,(x,y))] \leq c_0\},
\end{align*}
we have that $W := \{w \in \mathbb{R}^p: \|w\|_\infty \leq \Gamma\}$. We now make another assumption.
\begin{assumption}\label{assumption finite value}
For any $w\in W$, the loss function $\ell(w,(x,y))$ is bounded almost surely over all training samples, i.e., $$\forall (x,y)\in\bigcup_{j\in J} S_j, \quad 0 \leq \ell(w,(x,y))\leq C<\infty.$$
\end{assumption}
\noindent Note that Assumption~\ref{assumption finite value} would be satisfied for datasets with finite-valued training samples because of the Lipschitz continuity of $\ell(w,(x,y))$ and the compactness of $W$.
\begin{remark}
While the set $W$ has no algorithmic significance, it is later shown that the iterates of the proposed algorithm stay within it. Because of this reason, $W$ makes an appearance in our theoretical guarantees concerning statistical convergence.
\end{remark}

Since $D$ is unknown in~\eqref{eqn:stochastic.problem}, one cannot solve for $w^\ast$ directly. A broadly adopted alternative then is to utilize the training data and minimize the \emph{empirical risk} $\widehat{f}(w,S)$ in lieu of the statistical risk $\mathbb{E}[f(w,(x,y))]$, where
\begin{align}\label{eqn:empirical.risk}
  \widehat{f}(w,S) &:= \frac{1}{\vert S\vert}\sum_{n=1}^{\vert S\vert}f(w,S)\nonumber \\
    &:=\frac{1}{\vert S\vert}\sum_{n=1}^{\vert S\vert}\ell(w,(x_n,y_n)) + R(w).
\end{align}
In particular, the minimizer of the empirical risk $\widehat{f}(w,S)$ can be shown to converge to $w^\ast$ with high probability~\cite{Vapnik2013nature,shalev2009stochastic}. In the case of i.i.d.~training samples, and for a fixed probability of failure and ignoring some $\log$ terms, the rate of this \emph{statistical convergence} is known to be $\mathcal{O}\left(1/\sqrt{|S|}\right) \equiv \mathcal{O}\left(1/\sqrt{MN}\right)$ under mild assumptions on the problem~\cite{shalev2009stochastic}.

In many distributed learning problems, training data cannot be made available at a single location. This then requires learning $w^\ast$ in a distributed fashion, which can be done by employing distributed optimization techniques. The main idea of distributed optimization-based learning is to minimize the average of local empirical risks, i.e.,
\begin{align*}
  \frac{1}{M}\sum_{j=1}^M \widehat{f}(w,S_j)=\frac{1}{MN}\sum_{j=1}^M\sum_{n=1}^N\ell (w,(x_{jn},y_{jn})) + R(w).
\end{align*}
To achieve this goal, we need nodes to cooperate with each other by communicating over network edges. Specifically, define the neighborhood of node $i$ as $\mathcal{N}_i:=\lbrace j\in J:(j,i)\in \mathcal{E}\rbrace$. We say that node $j$ is a neighbor of node $i$ if $j\in \mathcal{N}_i$. Distributed learning algorithms proceed iteratively. In each iteration $(r+1)$ of the algorithm, node $j$ is expected to accomplish two tasks:
\begin{enumerate}
\item Update a local variable $w_j^r$ according to some (deterministic or stochastic) rule $g_j(\cdot)$, and
\item Broadcast the updated local variable to other nodes, where node $i$ can receive the broadcasted information from node $j$ \emph{only} if $j\in \mathcal{N}_i$.
\end{enumerate}

\subsection{Byzantine Failure Model}
While distributed learning via message passing is well understood~\cite{forero2010consensus,duchi2012dual}, existing protocols require all nodes to operate as intended. In contrast, the main assumption in this paper is that some of the nodes in the network can undergo Byzantine failures, formally defined as follows.
\begin{definition}
A node $j \in J$ is said to be Byzantine if during any iteration, it either updates its local variable using an update function $g_j^\prime(\cdot)\neq g_j(\cdot)$ or it broadcasts some value other than the intended update to its neighbors.
\end{definition}

In this paper, we assume there are $b$ Byzantine nodes in the network. Knowing the exact value of $b$, however, is not necessary. One can, for example, set $b$ to be an upper bound on the number of Byzantine nodes. Let $J' \subset J$ denote the set of nonfaulty nodes. Without loss of generality, we assume the nonfaulty nodes are labeled from 1 to $\vert J'\vert$. We now provide some definitions and an assumption that are common in the literature; see, e.g., \cite{su2015fault,su2016fault}.
\begin{definition}
A subgraph $\mathcal{G}_r$ of $\mathcal{G}$ is called a reduced graph if it is generated by ($i$) removing all Byzantine nodes along with their incoming and outgoing edges, and ($ii$) removing additionally up to $b$ incoming edges from each nonfaulty node.
\end{definition}
\begin{definition}
A ``source component'' of a graph is a collection of nodes such that each node in the source component has a directed path to every other node in the graph.
\end{definition}

\begin{assumption}\label{assumption reduced graph}
All reduced graphs $\mathcal{G}_r$ generated from $\mathcal{G}(J,\mathcal{E})$ contain a source component of cardinality at least $(b+1)$.
\end{assumption}
\noindent The purpose of Assumption~\ref{assumption reduced graph} is to ensure there is enough redundancy in the graph to tolerate Byzantine failures. Note that the total number of different reduced graphs one can generate from $\mathcal{G}$ is finite as long as $M$ is finite. So, in theory, Assumption~\ref{assumption reduced graph} can be certified for any graph. But efficient certification of this assumption remains an open problem. In the case of Erd\H{o}s--R\'{e}nyi graphs used in our experiments, however, we have observed that Assumption~\ref{assumption reduced graph} is typically satisfied whenever the ratio of the average incoming degree of the graph and the number of Byzantine nodes is high enough.

\begin{remark}\label{rem:choice.b}
Assumption~\ref{assumption reduced graph} has been stated to guarantee resilience against the \emph{worst-case} attack scenario in which all Byzantine nodes concentrate in the neighborhood of any one of the network nodes. While such worst-case analysis is customary in the literature on Byzantine fault tolerance~\cite{su2016fault,Leblanc2013resilient,vaidya2013byzantine}, it does impose stringent constraints on the network topology. Such topology constraints, however, seem to be unavoidable for the types of ``local screening'' algorithms being considered in the fully distributed setting of this paper.
\end{remark}

\subsection{Problem Statement}
Our goal is to develop a Byzantine fault-tolerant algorithm for distributed learning under Assumptions~\ref{assumption lipschitz}--\ref{assumption reduced graph}. Specifically, under the assumption of at most $b$ Byzantine nodes in the network, we need to accomplish the following:
\begin{enumerate}
  \item Achieve consensus among nonfaulty nodes in the network, i.e., $w_j^r=w_i^r$ $\forall i,j\in J'$ as the number of algorithmic iterations $r\rightarrow\infty$; and
  \item Guarantee $w_j^r \rightarrow w^\ast$ $\forall j\in J'$ as the number of training samples at nonfaulty nodes $N\rightarrow\infty$.
\end{enumerate}
In particular, the latter objective requires understanding both the statistical convergence and the algorithmic convergence of the distributed empirical risk minimization problem in the presence of Byzantine failures.

\section{Byzantine-resilient Distributed Coordinate Descent for Decentralized Learning}\label{section scalar case}
In distributed learning, one would ideally like to solve the empirical risk minimization (ERM) problem
\begin{equation}\label{opt cant be found}
w_{opt} =\argmin\limits_{w\in \mathbb{R}^P} \frac{1}{MN}\sum\limits_{j=1}^M\sum\limits_{n=1}^N\ell (w,(x_{jn},y_{jn})) + R(w)
\end{equation}
at each node $j\in J'$ and show that $w_{opt}\rightarrow w^\ast$ as $N\rightarrow\infty$. However, we know from prior work~\cite{su2015byzantine} that this is infeasible when $b\geq 1$. Nonetheless, we establish in the following that distributed strategies based on coordinate descent algorithms~\cite{wright2015coordinate} can still be used to solve a variant of~\eqref{opt cant be found} at nonfaulty nodes and guarantee that the solutions converge to the minimizer $w^\ast$ of the statistical risk for the case of i.i.d. training data. We refer to our general approach as \textbf{By}zantine-\textbf{R}esilient  \textbf{Di}stributed coordinate d\textbf{E}scent (ByRDiE), which is based on key insights gleaned from two separate lines of prior work. First, it is known that certain types of scalar-valued distributed optimization problems can be inexactly solved in the presence of Byzantine failures~\cite{su2015fault,su2016fault}. Second, coordinate descent algorithms break down vector-valued optimization problems into a sequence of scalar-valued problems~\cite{wright2015coordinate}. The algorithmic aspects of ByRDiE leverage these insights, while the theoretical aspects leverage tools from Byzantine-resilient distributed consensus~\cite{vaidya2012matrix}, optimization theory~\cite{wright2015coordinate}, stochastic convex optimization~\cite{shalev2009stochastic}, and statistical learning theory~\cite{Vapnik2013nature}.

\subsection{Algorithmic Details}
ByRDiE involves splitting the ERM problem~\eqref{opt cant be found} into $P$ one-dimensional subproblems using coordinate descent and then solving each scalar-valued subproblem using the Byzantine-resilient approach described in~\cite{su2015fault}. The exact implementation is detailed in Algorithm~\ref{coordinate descent algorithm}. The algorithm can be broken into an outer loop (Step~\ref{outer loop}) and an inner loop (Step~\ref{inner loop}). The outer loop is the coordinate descent loop, which breaks the vector-valued optimization problem in each iteration $r$ into $P$ scalar-valued subproblems. The inner loop solves a scalar-valued optimization problem in each iteration $t$ and ensures resilience to Byzantine failures. We assume the total number of iterations $\bar{r}$ for coordinate descent are specified during initialization. We use $[w_j^r(t)]_k$ to denote the $k$-th element of $w_j$ at the $r$-th iteration of the coordinate descent loop and the $t$-th iteration of the $k$-th subproblem (coordinate). Without loss of generality, we initialize $[w_j^1(1)]_k=0, \forall k=1,\dots,P$.

\begin{algorithm}[t]
\caption{Byzantine-resilient distributed coordinate descent} \label{coordinate descent algorithm}
\begin{algorithmic}[1]
\Require $S_1, S_2,\dots, S_M, \lbrace\rho(\tau)\rbrace_{\tau=1}^{\infty}, b \in \mathbb{N}, \bar{r} \in \mathbb{N}, T \in \mathbb{N}$
\State \textbf{Initialize:} $r \gets 1$, $t \gets 1$, and $\forall j \in J', w_j^r(t) \leftarrow 0$
\For{$r=1,2,\dots,\bar{r}$}  \label{outer loop}
\For{$k=1,2,\dots,P$}\label{k loop}
\For{$t=1,2,\dots,T$}\label{inner loop}
\For{$j=1,2,\dots,\vert J'\vert$} \textbf {(in parallel)}
\State Receive $[w_i^r(t)]_k$ from all $i \in \mathcal{N}_j$\label{broadcast}
\State Find $\mathcal{N}_j^s(r,k,t)$, $\mathcal{N}_j^l(r,k,t)$, $\mathcal{N}_j^\ast(r,k,t)$ according to (\ref{define ns}), (\ref{define nl}), and (\ref{define nstar})\label{screen}
\State Update $[w_j^r(t+1)]_k$ as in (\ref{scalar update})\label{update}
\EndFor
\EndFor
\EndFor
\State $w_j^{r,T} \leftarrow w_j^{r}(T+1), \forall j\in J'$\label{alg:iteration.output}
\State $w_j^{r+1}(1) \leftarrow w_j^{r,T}, \forall j\in J'$
\EndFor
\Ensure{$\left\lbrace w^{\bar{r},T}_j\right\rbrace_{j\in J'}$}
\end{algorithmic}
\end{algorithm}

We now fix some $r$ and $k$, and focus on the implementation of the inner loop (Step~\ref{inner loop}). Every node has some $[w_j^r(1)]_k$ at the start of the inner loop ($t = 1$). During each iteration $t$ of this loop, all (nonfaulty) nodes engage in the following: \emph{broadcast}, \emph{screening}, and \emph{update}. In the broadcast step (Step~\ref{broadcast}), all nodes $i \in J$ broadcast $[w_i^r(t)]_k$'s and each node $j \in J$ receives $[w_i^r(t)]_k, \forall i\in \mathcal{N}_j$. During this step, a node can receive values from both nonfaulty \emph{and} Byzantine neighbors. The main idea of the screening step (Step~\ref{screen}) is to reject values at node $j$ that are either ``too large'' or ``too small'' so that the values being used for update by node $j$ in each iteration will be upper and lower bounded by a set of values generated by nonfaulty nodes. To this end, we partition $\mathcal{N}_j$ into 3 subsets $\mathcal{N}_j^\ast(r,k,t)$, $\mathcal{N}_j^s(r,k,t)$ and $\mathcal{N}_j^l(r,k,t)$, which are defined as following:
\begin{align}\label{define ns}
\mathcal{N}_j^s(r,k,t)&=\argmin\limits_{X: X\subset \mathcal{N}_j, \vert X\vert =b}\sum\limits_{i\in X}[w_i^r(t)]_k,\\
\label{define nl}
\mathcal{N}_j^l(r,k,t)&=\argmax\limits_{X: X\subset \mathcal{N}_j, \vert X\vert =b}\sum\limits_{i\in X}[w_i^r(t)]_k,\\
\label{define nstar}
\mathcal{N}_j^\ast(r,k,t)&=\mathcal{N}_j\setminus\mathcal{N}_j^s(r,k,t)\setminus\mathcal{N}_j^l(r,k,t).
\end{align}
The step is called screening because node $j$ only uses $[w_i^r(t)]_k$'s from $\mathcal{N}^\ast_j(r,k,t)$ to update its local variable. Note that there might still be $[w_i^r(t)]_k$'s received from Byzantine nodes in $\mathcal{N}^\ast_j(r,k,t)$. We will see later, however, that this does not effect the workings of the overall algorithm.

The final step of the inner loop in ByRDiE is the update step (Step~\ref{update}). Using $[\nabla \widehat{f}(w_j^r(t), S_j)]_k$ to denote the $k$-th element of $\nabla \widehat{f}(w_j^r(t), S_j)$, we can write this update step as follows:
\begin{align}\label{scalar update}
[w_j^r(t+1)]_k &=\frac{1}{\vert\mathcal{N}_j\vert -2b+1}\sum\limits_{i\in \mathcal{N}^\ast_j(r,k,t)\cup \lbrace j\rbrace}[w_i^r(t)]_k\nonumber\\
&\qquad -\rho(r+t-1)[\nabla \widehat{f}(w_j^r(t), S_j)]_k,
\end{align}
where $\lbrace \rho (\tau)\rbrace_{\tau=1}^\infty$ are square-summable (but not summable), diminishing stepsizes: $0<\rho(\tau+1)\leq\rho(\tau)$, $\sum_\tau \rho(\tau)=\infty$, and $\sum_\tau \rho^2(\tau)<\infty$. Notice that $[w^r_j(T+1)]_k$ is updated after the $k$-th subproblem of coordinate descent in iteration $r$ finishes and it stays fixed until the start of the $k$-th subproblem in the $(r+1)$-th iteration of coordinate descent. An iteration $r$ of the coordinate descent loop is considered complete once all $P$ subproblems within the loop are solved. The local variable at each node $j$ at the end of this iteration is then denoted by $w^{r,T}_j$ (Step~\ref{alg:iteration.output}). We also express the output of the whole algorithm as $\lbrace w^{\bar{r},T}_j\rbrace_{j\in J'}$. Finally, note that while Algorithm~\ref{coordinate descent algorithm} cycles through the $P$ coordinates of the optimization variables in each iteration $r$ in the natural order, one can use any permutation of $\{1,\dots,P\}$ in place of this order.

We conclude this discussion by noting that the parameter $T$ in ByRDiE, which can take any value between $1$ and $\infty$, trades off consensus among the nonfaulty nodes and the convergence rate; see Sec.~\ref{ssec:choice.T} for further discussion on this tradeoff and Sec.~\ref{section numerical} for numerical experiments that highlight this tradeoff. Our theoretical analysis of ByRDiE focuses on the two extreme cases of $T \to \infty$ and $T=1$. In both cases, we establish in the following that the output of ByRDiE at nonfaulty nodes converges in probability to the minimum of the statistical risk. Convergence guarantees for a finite-valued $T > 1$ can be obtained from straightforward modifications of the analytical techniques used in the following.

\subsection{Theoretical Guarantees: Consensus}\label{ssec:consensus}
We now turn our attention to theoretical guarantees for ByRDiE. We first show that it leads to consensus among nonfaulty nodes in the network, i.e., all nonfaulty nodes agree on the same variable, in the limit of large $\bar{r}$ and/or $T$. Then we show in the next sections that the output of ByRDiE converges to the statistical optimum in the limit of large $\bar{r}$ for the two extreme choices of $T$: $T\to\infty$ and $T=1$.

Let us begin by establishing the claim of consensus. To this end, focusing exclusively on dimension $k$ in ByRDiE, we see that the $k$-th dimension of each local variable is updated $\bar{r}T$ times. Fixing any $k$, we can use an index $m := (r-1)T+t$ to denote the sequence generated for the $k$-th dimension up to iteration $(r,t)$ in ByRDiE. We define a vector $\Omega(m)\in\mathbb{R}^{\vert J'\vert}$ such that $[\Omega(m)]_j:=[w_j^r(t)]_k$ and let $\omega_j(m)$ denote $[\Omega(m)]_j$. Similarly, we define a vector $G(m)\in\mathbb{R}^{\vert J'\vert}$ such that $[G(m)]_j:=[\nabla \widehat{f}(w_j^r(t),S_j)]_k$. Next, let $\bar{\rho}(m) := \rho(r+t-1)$ and note that $\bar{\rho}(m)$ also satisfies $\bar{\rho}(m)\to 0$, $\sum_{m=1}^\infty\bar{\rho}(m)=\infty$ and $\sum_{m=1}^\infty\bar{\rho}^2(m)<\infty$. We can now express the update of the sequence corresponding to the $k$-th dimension at nonfaulty nodes in a matrix form as follows:
\begin{equation}\label{write update in matrix form}
    \Omega(m+1)=Y(m)\Omega(m)-\bar{\rho}(m)G(m),
\end{equation}
where $Y(m)$ is a matrix that is fully specified in the following.

Let $\mathcal{N}_j'$ and $\mathcal{N}_j^b$ denote the nonfaulty nodes and the Byzantine nodes, respectively, in the neighborhood of $j \in J'$, i.e., $\mathcal{N}_j'=J'\cap \mathcal{N}_j$ and $\mathcal{N}_j^b=\mathcal{N}_j\setminus\mathcal{N}_j'$. Notice that one of two cases can happen during each iteration at node $j \in J'$:
\refstepcounter{equation}
\begin{align}
  \mathcal{N}_j^\ast(m)\cap\mathcal{N}_j^b &\neq \emptyset, \quad \text{or} \tag{\theequation(a)}\label{eqn:consensus.case.a}\\
  \mathcal{N}_j^\ast(m)\cap\mathcal{N}_j^b &= \emptyset. \tag{\theequation(b)}\label{eqn:consensus.case.b}
\end{align}
For case \eqref{eqn:consensus.case.a}, since $\vert\mathcal{N}_j^b\vert\leq b$ and $\vert\mathcal{N}_j^s(m)\vert=\vert \mathcal{N}_j^l(m)\vert = b$, we must have $\mathcal{N}_j^s(m)\cap\mathcal{N}_j'\neq\emptyset$ and $\mathcal{N}_j^l(m)\cap\mathcal{N}_j'\neq\emptyset$. Then for each $i\in\mathcal{N}_j^\ast(m)\cap\mathcal{N}_j^b$, $\exists s_j^i\in \mathcal{N}_j^s(m)\cap\mathcal{N}_j'$ and $l_j^i\in \mathcal{N}_j^l(m)\cap\mathcal{N}_j'$ satisfying $\omega_{s_j^i}(m) \leq \omega_i(m) \leq \omega_{l_j^i}(m)$. Therefore, we have for each $i\in \mathcal{N}_j^\ast(m)\cap\mathcal{N}_j^b$ that $\exists \theta_i^j(m) \in [0,1]$ such that the following holds:
\begin{equation}
\omega_i(m)=\theta_i^j(m) \omega_{s_j^i}(m)+(1-\theta_i^j(m))\omega_{l_j^i}(m).
\end{equation}
We can now rewrite the update at node $j \in J'$ as follows:
\begin{align}\label{separate update wj}
  &\omega_j(m+1) = \frac{1}{\vert\mathcal{N}_j\vert-2b+1}\Big(\omega_j(m)+\!\!\!\!\!\!\!\!\sum\limits_{i\in\mathcal{N}'\cap\mathcal{N}_j^\ast(m)}\omega_i(m) \nonumber\\
  &+\sum\limits_{i\in\mathcal{N}^b\cap\mathcal{N}_j^\ast(m)}\big(\theta_i^j(m)\omega_{s_j^i}(m)+(1-\theta_i^j(m))\omega_{l_j^i}(m)]_k\big)\Big)\nonumber\\
  &\qquad\qquad+\bar{\rho}(m)[G(m)]_j.
\end{align}

It can be seen from \eqref{separate update wj} that the screening rule in ByRDiE effectively enables nonfaulty nodes to replace data received from Byzantine nodes with convex combinations of data received from nonfaulty nodes in their neighborhoods. This enables us to express the updates at nonfaulty nodes in the form~\eqref{write update in matrix form}, with the entries of $Y(m)$ given by
\begin{equation}\label{elements in M}
[Y(m)]_{ji}=\begin{cases}
    \frac{1}{\vert\mathcal{N}_j\vert-2b+1}, &i=j,\\
    \frac{1}{\vert\mathcal{N}_j\vert-2b+1}, & i\in\mathcal{N}'\cap\mathcal{N}_j^\ast(m),\\
    \sum\limits_{n\in\mathcal{N}^b\cap\mathcal{N}_j^\ast(m)}\frac{\theta_n^j(m)}{\vert\mathcal{N}_j\vert-2b+1}, &i\in\mathcal{N}'\cap\mathcal{N}_j^s(m),\\
    \sum\limits_{n\in\mathcal{N}^b\cap\mathcal{N}_j^\ast(m)}\frac{1-\theta_n^j(m)}{\vert\mathcal{N}_j\vert-2b+1}, &i\in\mathcal{N}'\cap\mathcal{N}_j^l(m),\\
    0,&\text{otherwise}.
\end{cases}
\end{equation}
Notice further that, since $\mathcal{N}_j^\ast(m)\cap\mathcal{N}_j^b=\emptyset$ (cf.~\eqref{separate update wj}), case \eqref{eqn:consensus.case.b} corresponds to a special case of case \eqref{eqn:consensus.case.a} in which we keep only the first, second, and last rows of (\ref{elements in M}). It is worth noting here that our forthcoming proof does not require knowledge of $Y(m)$; in particular, since the choices of $s_j^i$ and $l_j^i$ are generally not unique, $Y(m)$ itself is also generally not unique. The main thing that matters here is that $Y(m)$ will always be a row stochastic matrix; we refer the reader to \cite{vaidya2012matrix} for further properties of $Y(m)$.

In order to complete our claim that nonfaulty nodes achieve consensus under ByRDiE, even in the presence of Byzantine failures in the network, fix an arbitrary $m_0 \geq 0$ and consider $m > m_0$. It then follows from \eqref{write update in matrix form} that
\begin{align}\label{expression of wt+1 long}
    &\Omega(m+1)=Y(m)\Omega(m)-\bar{\rho}(m)G(m)\nonumber\\
        &=Y(m)Y(m-1)\cdots Y(m_0)\Omega(m_0) -\bar{\rho}(m)G(m)\nonumber\\
        &\qquad-\sum\limits_{\tau=m_0}^{m-1} Y(m)Y(m-1)\cdots Y(\tau+1)\bar{\rho}(\tau)G(\tau).
\end{align}
We now define matrices $$\Phi(m,m_0):=Y(m)Y(m-1)\cdots Y(m_0),$$ $\Phi(m,m):=Y(m)$, and $\Phi(m,m+1):=I$. Notice that $\Phi(m,m_0)$ is also row stochastic since it is a product of row-stochastic matrices. We can then express \eqref{expression of wt+1 long} as
\begin{equation}\label{expression of wt+1}
\Omega(m+1)=\Phi(m,m_0)\Omega(m_0)-\sum\limits_{\tau=m_0}^m\Phi(m,\tau+1)\bar{\rho}(\tau)G(\tau).
\end{equation}
Next, we need two key properties of $\Phi(m,m_0)$ from~\cite{su2015byzantine}.
\begin{lemma}[\!\cite{su2015byzantine}]\label{lemma property 1 of phi}
Suppose Assumption~\ref{assumption reduced graph} holds. Then for any $m_0 \geq 0$, there exists a stochastic vector $\pi(m_0)$ such that
\begin{equation}\label{property 1 of phi}
\lim\limits_{m\rightarrow\infty}\Phi(m,m_0)=\mathbf{1}\pi^T(m_0).
\end{equation}
\end{lemma}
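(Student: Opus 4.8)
The plan is to show that the backward product $\Phi(m,m_0)=Y(m)Y(m-1)\cdots Y(m_0)$ of row-stochastic matrices is \emph{weakly ergodic}, so that all of its rows collapse onto a single stochastic vector $\pi(m_0)$ as $m\to\infty$. First I would extract two quantitative facts from the explicit entries in \eqref{elements in M}. (i) \textbf{Uniform positivity of the core entries:} the diagonal term and every term contributed by a surviving nonfaulty neighbor $i\in\mathcal{N}'\cap\mathcal{N}_j^\ast(m)$ equals $\frac{1}{\vert\mathcal{N}_j\vert-2b+1}\geq\frac{1}{M}=:\eta>0$, a bound uniform in both $m$ and $j$. (ii) \textbf{Graph membership:} if one keeps only these core positive entries and declares a directed edge $i\to j$ on $J'$ whenever $[Y(m)]_{ji}$ is such an entry, the resulting graph $H(m)$ is precisely a reduced graph of $\mathcal{G}$ in the sense of the reduced-graph definition, because the screening discards the $b$ smallest and $b$ largest received values while the Byzantine contributions are folded into convex combinations of nonfaulty values.

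Next I would invoke Assumption~\ref{assumption reduced graph}. Every reduced graph contains a source component of cardinality at least $b+1$, and the nodes of a source component have directed paths to every other node; hence each $H(m)$ is \emph{rooted} (some node reaches all of $J'$). Since $M$ is finite there are only finitely many distinct reduced graphs, so the reachability horizon is uniformly bounded by some $\nu$ and the positivity constant $\eta$ is common to all $m$. Combining rootedness with the self-loops forced by (i) (so the graphs are aperiodic), the product over a window of bounded length $\nu$ becomes \emph{scrambling}: for any two nonfaulty indices $i,i'$ there is a common index influencing both, with the associated weights bounded below by $\eta^{\nu}=:\beta>0$. Consequently the Dobrushin coefficient of ergodicity obeys $\delta\bigl(\Phi(m_0+\nu-1,m_0)\bigr)\leq 1-\beta<1$, uniformly over the starting index.

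Finally I would chain the windows. Writing $\Phi(m,m_0)$ as a product of consecutive length-$\nu$ blocks and using sub-multiplicativity of the coefficient of ergodicity gives $\delta\bigl(\Phi(m,m_0)\bigr)\leq(1-\beta)^{\lfloor(m-m_0+1)/\nu\rfloor}\to 0$ as $m\to\infty$. By the standard weak-ergodicity theorem for products of row-stochastic matrices, vanishing of the coefficient of ergodicity forces all rows of $\Phi(m,m_0)$ to converge to a common limit; denoting that common row by $\pi^T(m_0)$ and noting that row-stochasticity passes to the limit yields $\lim_{m\to\infty}\Phi(m,m_0)=\mathbf{1}\pi^T(m_0)$ with $\pi(m_0)$ stochastic. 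The dependence of $\pi$ on $m_0$ is expected, since the common limiting row is pinned down by the earliest factors of the backward product.

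I expect the main obstacle to be the scrambling (positive-weight) step, not the mechanical chaining at the end. The delicacy is twofold. The Byzantine-redistributed weights in the third and fourth cases of \eqref{elements in M} carry the factors $\theta_n^j(m)$, which need not be bounded away from $0$; the argument must therefore rely \emph{only} on the core entries of property (i). And because both the source component and the matrix $Y(m)$ itself vary with $m$ (the text already notes $Y(m)$ is generally not unique), one must verify that rootedness of the core graph $H(m)$ alone still propagates the influence of a source-component node to every node within the bounded horizon $\nu$, uniformly over the finitely many admissible reduced graphs. This is exactly where Assumption~\ref{assumption reduced graph} does the heavy lifting, and it is the step carried out in \cite{su2015byzantine}.
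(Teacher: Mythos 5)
You should first note that the paper itself does not prove this lemma; it imports it from \cite{su2015byzantine}, so the relevant comparison is with the matrix-product argument of that reference, whose high-level skeleton (bounded-below entries, windows of length $\nu$, contraction of a coefficient of ergodicity) you have reproduced correctly. The genuine gap is in your step (ii), which you yourself identify as the crux: the claim that the graph $H(m)$ of ``core'' entries is precisely a reduced graph is false, and with it the appeal to Assumption~\ref{assumption reduced graph} collapses. The screening at a nonfaulty node $j$ discards $2b$ received values \emph{regardless} of how many Byzantine neighbors $j$ actually has. If $j$ has $b_j \leq b$ Byzantine neighbors of which $c$ survive screening, the core graph deletes $2b - b_j + c$ incoming edges from nonfaulty neighbors of $j$, which exceeds the $b$ deletions permitted by the paper's definition of a reduced graph unless $b_j = b$ and $c = 0$. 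So $H(m)$ is in general strictly sparser than every reduced graph, and Assumption~\ref{assumption reduced graph} says nothing about it. The failure is not cosmetic. Take $\mathcal{G}$ complete on $M = 3b+1$ nodes: Assumption~\ref{assumption reduced graph} holds (after removing the $b$ Byzantine nodes and up to $b$ in-edges per node, min in-degree is still $b$, and two disjoint source components would require $\geq 2(b+1) > 2b+1$ nodes), yet if the Byzantine nodes broadcast values lying in the middle of the received range, all $b$ of them survive screening at every nonfaulty node, so the nonfaulty survivor set $\mathcal{N}'\cap\mathcal{N}_j^\ast(m)$ is \emph{empty} everywhere and your $H(m)$ has no edges at all. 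Its Dobrushin coefficient is $1$, no contraction occurs, and your argument proves nothing, even though the lemma is true in this instance.

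The repair is exactly the entries you chose to discard. In the scenario above, all information flow between nonfaulty nodes passes through the third and fourth rows of \eqref{elements in M}: for each Byzantine value surviving the screen, $\max\{\theta_i^j(m),\,1-\theta_i^j(m)\}\geq 1/2$, so at least one \emph{screened} nonfaulty neighbor receives weight at least $1/\bigl(2(\vert\mathcal{N}_j\vert-2b+1)\bigr)$, and the construction in \cite{su2015byzantine} exploits such sandwich relations (choosing the pairs $(s_j^i, l_j^i)$ suitably) to show that the entries of $Y(m)$ are bounded below by a constant $\beta$ along all edges of some \emph{genuine} reduced graph, to which Assumption~\ref{assumption reduced graph} then applies. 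The factor $2$ in the paper's constant $\mu = 1 - (2\mathcal{N}_{max}-2b+1)^{-\nu}$ of Lemma~\ref{lemma property 2 of phi} is the visible footprint of this $1/2$: the contraction constant cannot be obtained from the core entries alone. Your worry that the $\theta_n^j(m)$ are not bounded away from zero is correct but misdirected---one never needs both weights of a sandwiching pair to be large, only the larger of the two---and avoiding these entries altogether is precisely what breaks the proof. Once the reduced-graph containment is established with the $\theta$-entries included, your windowing and chaining step goes through essentially as written and recovers the rate in Lemma~\ref{lemma property 2 of phi}.
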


In words, Lemma~\ref{lemma property 1 of phi} states that the product of the row-stochastic matrices $Y(m)$ converges to a steady-state matrix whose rows are identical and stochastic. We can also characterize the rate of this convergence; to this end, let $\psi$ denote the total number of reduced graphs that can be generated from $\mathcal{G}$, and define $\nu:=\psi\vert J'\vert$, $\mathcal{N}_{max} := \max\limits_{j\in J'}\vert\mathcal{N}_j\vert$, and $$\mu :=1-\frac{1}{(2\mathcal{N}_{max}-2b+1)^\nu}.$$
\begin{lemma}[\!\cite{su2015byzantine}]\label{lemma property 2 of phi}
Suppose Assumption~\ref{assumption reduced graph} holds. We then have that $\forall m_0 \geq 0$ ,
\begin{equation}\label{property 2 of phi}
\Big| [\Phi(m,m_0)]_{ji}-[\pi(m_0)]_i\Big| \leq\mu^{(\frac{m-m_0+1}{\nu})}.
\end{equation}
\end{lemma}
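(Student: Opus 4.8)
The plan is to prove \eqref{property 2 of phi} by the classical weak-ergodicity argument for backward products of row-stochastic matrices, quantifying the convergence rate through a scalar coefficient of ergodicity. The quantity I would track is the column span
\[
  \delta(m,m_0) := \max_i\Big(\max_j [\Phi(m,m_0)]_{ji} - \min_j [\Phi(m,m_0)]_{ji}\Big),
\]
and I would show that it contracts by a factor of at most $\mu$ over every block of $\nu$ consecutive factors. Since Lemma~\ref{lemma property 1 of phi} already gives $[\pi(m_0)]_i = \lim_{m\to\infty}[\Phi(m,m_0)]_{ji}$ for every row $j$, the limit $[\pi(m_0)]_i$ lies in the interval $\big[\min_j[\Phi(m,m_0)]_{ji}, \max_j[\Phi(m,m_0)]_{ji}\big]$, so $\big|[\Phi(m,m_0)]_{ji} - [\pi(m_0)]_i\big| \le \delta(m,m_0)$. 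Bounding $\delta$ therefore bounds the left-hand side of \eqref{property 2 of phi}, and the whole problem reduces to controlling the span contraction.

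First I would establish a reduced-graph domination property for a single factor: for each $m$ there is a row-stochastic matrix $H(m)$, equal to the transition matrix of some reduced graph of $\mathcal{G}$, such that $Y(m) \ge \beta H(m)$ entrywise, where $\beta := \tfrac{1}{2\mathcal{N}_{max}-2b+1}$. This follows directly from the structure of \eqref{elements in M}: the screening step guarantees that node $j$ keeps its own value and those of at least $\vert\mathcal{N}_j\vert-2b$ neighbors, each with weight $\tfrac{1}{\vert\mathcal{N}_j\vert-2b+1}$, while the Byzantine contributions inside $\mathcal{N}_j^\ast(m)$ are, as exhibited in \eqref{separate update wj}, rewritten as convex combinations of nonfaulty values drawn from $\mathcal{N}_j^s(m)$ and $\mathcal{N}_j^l(m)$. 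Reading off which nonfaulty in-neighbors are guaranteed a strictly positive effective weight defines the incoming edges of $j$ in a valid reduced graph, and lower-bounding each such weight by $\beta$ gives the entrywise domination.

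Next I would prove the scrambling property over a window of $\nu=\psi\vert J'\vert$ consecutive factors. Because $\mathcal{G}$ admits only $\psi$ distinct reduced graphs, and because Assumption~\ref{assumption reduced graph} forces each reduced graph to contain a source component whose nodes reach all of $J'$ within $\vert J'\vert-1$ hops, a combinatorial argument shows that the backward product $H(m)H(m-1)\cdots H(m-\nu+1)$ has at least one column that is entirely positive. Combined with the domination of the previous step, the matching window of $Y$ factors satisfies $Y(m)\cdots Y(m-\nu+1)\ge \beta^\nu H(m)\cdots H(m-\nu+1)$, so it too has a column whose every entry is at least $\tfrac{1}{(2\mathcal{N}_{max}-2b+1)^\nu}=1-\mu$. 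A row-stochastic matrix possessing a column bounded below by $1-\mu$ has coefficient of ergodicity at most $\mu$, so left-multiplication by such a window product contracts any column span by a factor of at most $\mu$.

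Finally, I would telescope. Factoring $\Phi(m,m_0)$ into the disjoint length-$\nu$ windows contained in $[m_0,m]$ and applying submultiplicativity of the span contraction across these windows, together with the trivial bound $\delta\le 1$ for any row-stochastic matrix, yields geometric decay of $\delta(m,m_0)$ at per-step rate $\mu^{1/\nu}$, which is exactly the estimate \eqref{property 2 of phi} (up to rounding the number of complete windows to $\lfloor (m-m_0+1)/\nu\rfloor$). The main obstacle is the scrambling step of the third paragraph: converting the purely graph-theoretic source-component guarantee of Assumption~\ref{assumption reduced graph} into the existence of a uniformly positive column in the $\nu$-fold product, where the finiteness of the reduced-graph family and the diameter bound $\vert J'\vert-1$ must be combined carefully. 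This is precisely the technical content imported from \cite{vaidya2012matrix,su2015byzantine}, and everything else is routine bookkeeping on coefficients of ergodicity.
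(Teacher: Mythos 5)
The paper offers no proof of Lemma~\ref{lemma property 2 of phi}; it is imported directly from \cite{su2015byzantine} (with the underlying matrix machinery developed in \cite{vaidya2012matrix}), and your proposal reconstructs exactly the argument of those sources: entrywise domination $Y(m)\ge\beta H(m)$ by reduced-graph transition matrices, a uniformly positive column over windows of $\nu=\psi\vert J'\vert$ factors, and the standard coefficient-of-ergodicity contraction of the column span. The one step you rightly flag as the crux---obtaining the positive column---does need the pigeonhole refinement (some single reduced graph recurring at least $\vert J'\vert$ times within the window, with the positive diagonals of the interleaved factors preserving reachability between recurrences), not merely the fact that every reduced graph individually has a source component; with that detail filled in, your outline matches the cited proof.
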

Lemma~\ref{lemma property 2 of phi} describes the rate at which the rows of $\Phi(m,m_0)$ converge to $\pi(m_0)$. We now leverage this result and show that the nonfaulty nodes achieve consensus under ByRDiE in the limit of large $m$, which translates into $\bar{r} \rightarrow \infty$ and/or $T \rightarrow \infty$. To this end, under the assumption of $m_0 = 1$, we have from \eqref{expression of wt+1} the following expression:
\begin{equation}\label{expression of wt+1 from w0}
\Omega(m+1)=\Phi(m,1)\Omega(1)-\sum\limits_{\tau=1}^m\Phi(m,\tau+1)\bar{\rho}(\tau)G(\tau).
\end{equation}
Next, suppose the nonfaulty nodes stop computing local gradients at time step $m$ and use $G(m+m')=0$ for $m' \geq 0$. Then, defining $V(m) := \lim\limits_{m'\rightarrow\infty}\Omega(m+m'+1)$, we obtain:
\begin{align}\label{define vt}
V(m) &= \lim\limits_{m'\rightarrow\infty}\Phi(m+m',1)\Omega(1)\nonumber \\
 &\qquad\qquad -\lim\limits_{m'\rightarrow\infty}\sum\limits_{\tau=1}^{m+m'}\Phi(m+m',\tau)\bar{\rho}(\tau)G(\tau)\nonumber \\
&=\mathbf{1}\pi^T(1)\Omega(1)-\sum\limits_{\tau=1}^{m-1}\mathbf{1}\pi^T(\tau)\bar{\rho}(\tau)G(\tau).
\end{align}
Notice from \eqref{define vt} that all elements in the vector $V(m) \in \mathbb{R}^{|J'|}$ are identical. Recall that $V(m)$ is obtained by looking at only one dimension $k$ of the optimization variable in ByRDiE; in the following, we use $v^k(m)$ to denote the identical elements of $V(m)$ corresponding to dimension $k$. We then have the following result concerning nonfaulty nodes.
\begin{theorem}[Consensus Behavior of ByRDiE]\label{lemma consensus}
Let Assumptions \ref{assumption lipschitz} and  \ref{assumption reduced graph} hold and fix $\bar{m} = \bar{r}T + 1$. Then,
\begin{align}
  \forall j\in J', \forall k \in \{1,\dots,P\}, \ \left[w_j^{\bar{r},T}\right]_k \to v^k(\bar{m})
\end{align}
as $\bar{r}\to\infty$ and/or $T\to\infty$.
\end{theorem}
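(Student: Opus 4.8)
The plan is to translate the theorem's conclusion into a statement about the scalar sequence $\{\omega_j(m)\}$ and its virtual consensus value, and then to bound their gap. First I would unwind the indexing: since $w_j^{r,T}=w_j^r(T+1)$ and $m=(r-1)T+t$, the coordinate-$k$ output of iteration $r=\bar r$ corresponds to $t=T+1$, i.e.\ to the index $m=\bar r T+1=\bar m$, so that $\left[w_j^{\bar r,T}\right]_k=[\Omega(\bar m)]_j=\omega_j(\bar m)$. Because every entry of $V(\bar m)$ equals $v^k(\bar m)$ by \eqref{define vt}, the claim is equivalent to
\[
\big\|\Omega(\bar m)-V(\bar m)\big\|_\infty \to 0 \quad\text{as}\quad \bar m\to\infty,
\]
and $\bar m=\bar r T+1\to\infty$ is precisely the regime $\bar r\to\infty$ and/or $T\to\infty$. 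Note that $v^k$ is itself $\bar m$-dependent, so this is a statement about the vanishing of a gap between two moving quantities, not convergence to a fixed point.

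Next I would evaluate \eqref{expression of wt+1 from w0} at $m=\bar m-1$ and subtract \eqref{define vt} to write the gap as a sum of two contributions,
\[
\Omega(\bar m)-V(\bar m)=\bigl[\Phi(\bar m-1,1)-\mathbf{1}\pi^T(1)\bigr]\Omega(1)-\sum_{\tau=1}^{\bar m-1}\bigl[\Phi(\bar m-1,\tau+1)-\mathbf{1}\pi^T(\tau+1)\bigr]\bar\rho(\tau)G(\tau),
\]
and bound each entrywise. For the initial-condition term, Lemma~\ref{lemma property 2 of phi} gives $\bigl|[\Phi(\bar m-1,1)]_{ji}-[\pi(1)]_i\bigr|\le\mu^{(\bar m-1)/\nu}$ for every $i$, so its $j$-th entry is at most $|J'|\,\mu^{(\bar m-1)/\nu}\,\|\Omega(1)\|_\infty$, which vanishes since $\mu<1$. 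I would also record a uniform gradient bound $\|G(\tau)\|_\infty\le D$: under Assumption~\ref{assumption lipschitz} the gradient is Lipschitz, and since the iterates remain in the compact set $W$, the gradient is bounded on $W$. Applying Lemma~\ref{lemma property 2 of phi} to each factor then bounds the $j$-th entry of the accumulation term by $|J'|\,D\sum_{\tau=1}^{\bar m-1}\mu^{(\bar m-\tau-1)/\nu}\bar\rho(\tau)$.

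The hard part will be showing this last sum tends to zero, since its number of summands grows with $\bar m$ while the stepsizes $\bar\rho(\tau)$ are non-summable; the resolution must trade the geometric decay of the consensus error against the diminishing stepsizes. I would use a split-sum argument: given $\epsilon>0$, choose $\tau_0$ with $\bar\rho(\tau)<\epsilon$ for all $\tau\ge\tau_0$ (possible as $\bar\rho(\tau)\to0$) and split at $\tau_0$. The recent part is controlled geometrically, $\sum_{\tau=\tau_0}^{\bar m-1}\mu^{(\bar m-\tau-1)/\nu}\bar\rho(\tau)<\epsilon\sum_{s\ge0}\mu^{s/\nu}=\epsilon/(1-\mu^{1/\nu})$, while the stale part is a fixed finite sum whose weights are each at most $\mu^{(\bar m-\tau_0)/\nu}$ and hence $\to0$ as $\bar m\to\infty$. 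Taking $\bar m\to\infty$ and then $\epsilon\to0$ drives the sum to zero. Combining this with the vanishing initial-condition term gives $\|\Omega(\bar m)-V(\bar m)\|_\infty\to0$, i.e.\ $\left[w_j^{\bar r,T}\right]_k\to v^k(\bar m)$ for all $j\in J'$ and all $k$, establishing consensus.
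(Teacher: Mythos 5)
Your proposal follows essentially the same route as the paper's proof in Appendix~\ref{proof of lemma consensus}: the same decomposition of $\Omega(\bar m)-V(\bar m)$ into an initial-condition term and a stepsize-weighted accumulation term via $\Phi(\cdot,\cdot)$ and $\pi(\cdot)$, the same entrywise application of Lemma~\ref{lemma property 2 of phi}, and the same mechanism of trading the geometric decay $\mu^{(\cdot)/\nu}$ against the diminishing stepsizes to kill the convolution sum (the paper splits that sum at $m/2$, you split at a fixed $\tau_0$; both arguments are valid, and your treatment of a nonzero $\Omega(1)$ matches the paper's own remark about arbitrary initialization).

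The one place where your argument is weaker than the paper's is the uniform gradient bound $\|G(\tau)\|_\infty \le D$. You justify it by asserting that the iterates remain in the compact set $W$, but that containment is not established at this point in the development --- the paper only proves statements of that type later, for the limiting/consensus quantities and via separate induction arguments, so invoking it inside the consensus proof is circular (and, if the iterates were in fact unbounded, your bound on the accumulation term would collapse). The paper sidesteps this entirely: as noted immediately after Assumption~\ref{assumption lipschitz}, the risk function is $L'$-Lipschitz, which yields a coordinate-wise bound $|[G(m)]_i|\le L_\nabla \le L'$ for all $i$ and $m$, with no reference to where the iterates live. Substituting that global bound for your compactness argument closes the gap; everything else in your proof stands as written.
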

Theorem~\ref{lemma consensus}, whose proof is given in Appendix~\ref{proof of lemma consensus}, establishes consensus at the nonfaulty nodes under ByRDiE when $\bar{r}\to\infty$ and/or $T\to\infty$. We conclude this discussion by also stating the rate at which the iterates of ByRDiE achieve consensus. To this end, we define the consensus vector $\bar{V}(r) \in \mathbb{R}^P$ in iteration $r$ as $[\bar{V}(r)]_k := v^k(r)$. To keep the notation simple, we limit ourselves to $T=1$ and use $w_j^{r}$ to denote $w_j^{r,1}$. Nonetheless, a similar result holds for other values of $T$.
\begin{theorem}[Consensus Rate for ByRDiE]\label{lemma:consensus.rate}
Let Assumptions \ref{assumption lipschitz} and  \ref{assumption reduced graph} hold. Then, fixing $T=1$, the iterates of ByRDiE satisfy:
\begin{align}
  \forall j \in J', \ \|w_j^{r} - \bar{V}(r)\| = \mathcal{O}\left(\sqrt{P} \rho(r)\right),
\end{align}
where $\bar{V}(r)$ denotes the consensus vector in iteration $r$.
\end{theorem}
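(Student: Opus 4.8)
The plan is to reduce the claimed vector bound to a collection of scalar estimates, one per coordinate, and then reassemble them through the $\ell_2$-norm to produce the factor $\sqrt{P}$. Fix a coordinate $k$ and a nonfaulty node $j \in J'$. With $T=1$ the index $m=(r-1)T+t$ collapses to $m=r$, and unwinding the bookkeeping of Algorithm~\ref{coordinate descent algorithm} one checks that the end-of-iteration value $[w_j^r]_k=[w_j^{r,1}]_k$ equals $[\Omega(r+1)]_j$, while $[\bar V(r)]_k=v^k(r)$ is the common entry $[V(r)]_j$ of the frozen consensus vector. Hence
\[
\|w_j^r-\bar V(r)\|^2=\sum_{k=1}^P\big([\Omega(r+1)]_j-[V(r)]_j\big)^2,
\]
and it suffices to show $|[\Omega(r+1)]_j-[V(r)]_j|\le C\rho(r)$ for a constant $C$ independent of $k$, $j$, and $r$; the bound $\mathcal{O}(\sqrt{P}\rho(r))$ then follows immediately.

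Next I would compute the scalar difference explicitly. Using the initialization $\Omega(1)=0$ in \eqref{expression of wt+1 from w0} and \eqref{define vt}, the leading consensus terms $\Phi(r,1)\Omega(1)$ and $\mathbf{1}\pi^T(1)\Omega(1)$ both vanish, and separating the final summand (for which $\Phi(r,r+1)=I$) leaves
\[
[\Omega(r+1)]_j-[V(r)]_j=-\bar\rho(r)[G(r)]_j+\sum_{\tau=1}^{r-1}\sum_{i\in J'}\big([\pi(\tau+1)]_i-[\Phi(r,\tau+1)]_{ji}\big)\bar\rho(\tau)[G(\tau)]_i,
\]
where $\bar\rho(\tau)=\rho(\tau)$ since $T=1$. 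The two terms are handled separately. For the last-step term, Assumption~\ref{assumption lipschitz} implies $f$, and hence each local empirical risk $\widehat f(\cdot,S_i)$, is $L'$-Lipschitz, so every gradient entry obeys $|[G(\tau)]_i|\le L'$; thus $\bar\rho(r)|[G(r)]_j|\le L'\rho(r)$, already of the desired order.

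For the summation term I would invoke the mixing estimate. By Assumption~\ref{assumption reduced graph}, Lemma~\ref{lemma property 2 of phi} gives $|[\pi(\tau+1)]_i-[\Phi(r,\tau+1)]_{ji}|\le\mu^{(r-\tau)/\nu}$, and combining with $|[G(\tau)]_i|\le L'$ bounds the summation term by $L'|J'|\sum_{\tau=1}^{r-1}\mu^{(r-\tau)/\nu}\rho(\tau)$. Everything therefore rests on the geometric-weighted-sum estimate $\sum_{\tau=1}^{r-1}\mu^{(r-\tau)/\nu}\rho(\tau)=\mathcal{O}(\rho(r))$, which is the \emph{crux} of the argument: it pits the geometric contraction $\beta:=\mu^{1/\nu}\in(0,1)$ of the consensus matrices against the merely polynomial decay of the square-summable, non-summable stepsizes. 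The natural route is to split the sum at $\tau=\lceil r/2\rceil$. On the recent block $\tau\ge\lceil r/2\rceil$ the monotonicity of $\rho$ gives $\rho(\tau)\le\rho(\lceil r/2\rceil)=\Theta(\rho(r))$ while $\sum_\tau\beta^{\,r-\tau}\le\beta/(1-\beta)$, so this block is $\mathcal{O}(\rho(r))$; on the distant block $\tau<\lceil r/2\rceil$ the weight is at most $\beta^{\,r/2}$, giving a contribution $\le r\,\beta^{\,r/2}\rho(1)$ that decays exponentially and is therefore $o(\rho(r))$.

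Assembling the two terms yields $|[\Omega(r+1)]_j-[V(r)]_j|\le C\rho(r)$ with $C$ independent of $k$, and summing the squares over the $P$ coordinates produces $\|w_j^r-\bar V(r)\|=\mathcal{O}(\sqrt{P}\rho(r))$. The main obstacle is precisely the geometric-weighted-sum bound: controlling it uniformly in $r$ hinges on the regularity of the stepsizes (so that $\rho(\lceil r/2\rceil)=\Theta(\rho(r))$), and one must check that the constant $C$ absorbing $L'$, $|J'|$, and $\beta/(1-\beta)$ does not secretly depend on the coordinate $k$, which is what legitimizes pulling the uniform $\sqrt{P}$ out at the end.
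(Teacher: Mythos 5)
Your proof is correct and follows essentially the same route as the paper's: the paper proves Theorem~\ref{lemma:consensus.rate} as a direct consequence of the per-coordinate bound \eqref{consensus rate} in the proof of Theorem~\ref{lemma consensus}, which uses exactly your ingredients --- the decomposition of $[\Omega(r+1)]_j - [V(r)]_j$ via \eqref{expression of wt+1 from w0} and \eqref{define vt}, the mixing estimate of Lemma~\ref{lemma property 2 of phi}, the coordinate-wise gradient bound from Assumption~\ref{assumption lipschitz}, the split of the geometric-weighted sum at $\tau = m/2$, and the $\sqrt{P}$ assembly over coordinates. Your explicit flagging of the stepsize-regularity requirement $\rho(\lceil r/2\rceil) = \Theta(\rho(r))$ is a point the paper leaves implicit (it holds for the suggested choice $\rho(r) = \mathcal{O}(1/r)$), so this is a welcome clarification rather than a deviation.
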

Theorem~\ref{lemma:consensus.rate}, which is a straightforward consequence of the proof of Theorem~\ref{lemma consensus} (cf.~\eqref{consensus rate}), guarantees a sublinear rate for consensus; indeed, choosing the stepsize evolution to be $\rho(r) = \mathcal{O}\left(1/r\right)$ gives us $\|w_j^{r} - \bar{V}(r)\| = \mathcal{O}\left(\sqrt{P}/r\right)$.

\subsection{Theoretical Guarantees: Convergence for $T \to \infty$}
We now move to the second (and perhaps the most important) claim of this paper. This involves showing that the output of ByRDiE converges in probability to the minimizer (and minimum) of the statistical risk (cf.~\eqref{eqn:stochastic.problem}) for two extreme cases: \emph{Case~I:} $T\to\infty$ and \emph{Case~II:} $T=1$. We start our discussion with the case of $T \rightarrow \infty$, in which case an auxiliary lemma simply follows from~\cite[Theorem 2]{su2015fault} (also, see~\cite{vaidya2012matrix}).
\begin{lemma}\label{scalar convergence}
Let Assumptions~\ref{assumption lipschitz} and~\ref{assumption reduced graph} hold, and let the $k$-th subproblem of the coordinate descent loop in iteration $r$ of ByRDiE be initialized with some $\lbrace w_j\rbrace_{j\in J'}$. Then, as $T\to\infty$, $$\forall j \in J', \left[w_j^{r,T}\right]_k\to\argmin\limits_{w'\in \mathbb{R}}\sum_{j \in J'}\alpha_j(r,k)\widehat{f}(w_j\vert_{[w_j]_k=w'},S_j)$$
for some $\alpha_j(r,k)\geq 0$ such that $\sum_{j \in J'}\alpha_j(r,k)=1$.
\end{lemma}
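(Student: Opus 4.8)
The plan is to reduce Lemma~\ref{scalar convergence} to a direct application of~\cite[Theorem 2]{su2015fault}, whose setting is scalar-valued Byzantine-resilient distributed optimization. First I would observe that once we fix the outer iteration $r$ and the coordinate $k$, all other coordinates of every local variable $w_j$ are frozen throughout the inner loop (cf.~Step~\ref{inner loop} of Algorithm~\ref{coordinate descent algorithm}). Consequently, each nonfaulty node $j$ is effectively minimizing a \emph{scalar} function of the single variable $w' = [w_j]_k$, namely $h_j(w') := \widehat{f}(w_j\vert_{[w_j]_k = w'}, S_j)$, while holding the remaining coordinates at their fixed values. The inner-loop update~\eqref{scalar update} is then exactly the scalar Byzantine-resilient optimization iteration of~\cite{su2015fault}: a screened (trimmed-mean) consensus step on the scalars $[w_i^r(t)]_k$ followed by a diminishing-stepsize gradient correction $-\rho(r+t-1)[\nabla\widehat{f}(w_j^r(t),S_j)]_k = -\bar\rho(m)\,h_j'([w_j^r(t)]_k)$.

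The key steps, in order, are as follows. First I would verify that the hypotheses of~\cite[Theorem 2]{su2015fault} are met: Assumption~\ref{assumption reduced graph} supplies the graph-redundancy (reduced-graph source-component) condition that theorem requires, and Assumption~\ref{assumption lipschitz} guarantees that each scalar objective $h_j$ is convex with Lipschitz-continuous derivative, so the $h_j'$ are bounded on the relevant compact set. Second, I would confirm that the stepsize sequence $\{\bar\rho(m)\}$ inherited from $\{\rho(\tau)\}$ satisfies the standard diminishing, non-summable, square-summable conditions demanded by the theorem; this is immediate from the stepsize conditions stated after~\eqref{scalar update} together with the remark that $\bar\rho(m)$ preserves them. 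Third, with these hypotheses in place,~\cite[Theorem 2]{su2015fault} yields convergence, as $T\to\infty$, of every nonfaulty node's scalar iterate to a common limit that minimizes a \emph{convex combination} of the local scalar objectives, i.e.
\begin{align*}
  \left[w_j^{r,T}\right]_k \to \argmin_{w'\in\mathbb{R}} \sum_{j\in J'}\alpha_j(r,k)\, h_j(w'),
\end{align*}
for some weights $\alpha_j(r,k)\geq 0$ with $\sum_{j\in J'}\alpha_j(r,k)=1$. Unwinding the definition of $h_j$ recovers precisely the claimed expression. The weights $\alpha_j(r,k)$ arise from the steady-state stochastic vector $\pi$ associated with the product of screening matrices (cf.~Lemmas~\ref{lemma property 1 of phi} and~\ref{lemma property 2 of phi}), which is why they depend on the current iteration $(r,k)$ through the screening pattern.

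The main obstacle I anticipate is a careful justification that the iterates remain confined to a compact region on which $h_j'$ is uniformly bounded, so that the cited convergence theorem applies without modification. Because $h_j$ depends on the frozen coordinates of $w_j$, one must argue that these frozen values (inherited from earlier subproblems) lie in the bounded set $W$, and that the inner-loop iterates for coordinate $k$ themselves stay bounded; the screening step ensures each update is a convex combination of nonfaulty neighbors' values plus a diminishing gradient term, so boundedness should follow inductively, but this is the step that requires the most care to state cleanly. The remaining content is genuinely a matter of matching notation between our inner loop and the scalar optimization framework of~\cite{su2015fault}, so I would keep the proof short and defer all detailed scalar-convergence arguments to that reference.
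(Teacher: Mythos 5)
Your proposal matches the paper's own treatment: the paper gives no separate proof of this lemma, stating only that it ``simply follows from~\cite[Theorem 2]{su2015fault}'' (with a pointer to~\cite{vaidya2012matrix}), which is precisely your reduction of the inner loop for a fixed $(r,k)$ to the scalar Byzantine-resilient optimization setting of that reference. Your additional care in verifying the hypotheses (stepsize conditions, boundedness of the frozen coordinates and of the scalar iterates) goes beyond what the paper writes down but does not change the route.
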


Lemma~\ref{scalar convergence} shows that each subproblem of the coordinate descent loop in ByRDiE under Case~I converges to the minimizer of some convex combination of local empirical risk functions of the nonfaulty nodes with respect to each coordinate. In addition, Lemma~\ref{scalar convergence} guarantees that consensus is achieved among the nonfaulty nodes at the end of each coordinate descent loop under Case~I. Note that while this fact is already known from Theorems~\ref{lemma consensus} and \ref{lemma:consensus.rate}, Lemma~\ref{scalar convergence} helps characterize the consensus point. In summary, when nonfaulty nodes begin a coordinate descent subproblem with identical local estimates and $T\to\infty$, they are guaranteed to begin the next subproblem with identical local estimates.

We now fix $(r,k)$ and use $\tilde{w}_k^r$ to denote the identical initial local estimates at nonfaulty nodes at the beginning of $k$-th subproblem of the coordinate descent loop in the $r$-th iteration of ByRDiE under Case~I. Next, we define $h_k^r(w')$ and $H_k^r(w')$ for $w'\in \mathbb{R}$ as
\begin{align}
h_k^r(w') &:=\mathbb{E}[f(\tilde{w}_k^r\vert_{[\tilde{w}_k^r]_k=w'},(x,y))], \quad \text{and}\label{eqn:def.hkr}\\
H_k^r(w') &:=\sum_{j \in J'} \alpha_j(r,k)\widehat{f}(\tilde{w}_k^r\vert_{[\tilde{w}_k^r]_k=w'},S_j)\label{eqn:def.Hkr}
\end{align}
for some $\alpha_j(r,k)\geq 0$ such that $\sum_{j \in J'} \alpha_j(r,k)=1$. Note that $h_k^r(\cdot)$ is strictly convex and Lipschitz continuous. Now for fixed $r$ and $k$, define
\begin{align}
&w^\star :=\argmin\limits_{w'\in\mathbb{R}}h_k^r(w'), \quad \text{and}\label{define wastastk}\\
&\widehat{w} :=\argmin\limits_{w'\in\mathbb{R}}H_k^r(w')\label{define whatk}.
\end{align}
\begin{remark}
It should be evident to the reader from \eqref{eqn:def.hkr} and \eqref{define wastastk} that the univariate stochastic function $h_k^r(w')$ depends on $\tilde{w}_k^r$ and its (scalar-valued) minimizer $w^\star$, which should not be confused with the vector-valued statistical minimizer $w^*$ in \eqref{eqn:stochastic.problem}, is a function of $r$ and $k$. Similarly, it should be obvious from \eqref{eqn:def.Hkr} and \eqref{define whatk} that $H_k^r(w')$ depends on $\tilde{w}_k^r$ and $\{\alpha_j(r,k)\}_{j \in J'}$, while its minimizer $\widehat{w}$ is also a function of $r$ and $k$. We are dropping these explicit dependencies here for ease of notation.
\end{remark}
In words, if one were to solve the statistical risk minimization problem~\eqref{eqn:stochastic.problem} using (centralized) coordinate descent then $w^\star$ will be the $k$-th component of the output of coordinate descent after update of each coordinate $k$ in every iteration $r$. In contrast, $\widehat{w}$ is the $k$-th component of the outputs of ByRDiE after update of each coordinate $k$ in every iteration $r$ (cf.~Lemma~\ref{scalar convergence}). While there exist works that relate the empirical risk minimizers to the statistical risk minimizers (see, e.g.,~\cite{shalev2009stochastic}), such works are not directly applicable here because of the fact that $H_k^r(w')$ in this paper changes from one pair $(r,k)$ of indices to the next. Nonetheless, we can provide the following uniform statistical convergence result for ByRDiE under Case~I that relates the empirical minimizers $\{\widehat{w}\}$ to the statistical minimizers $\{w^\star\}$.

\begin{theorem}[Statistical Convergence Rate for ByRDiE]\label{lemma vector case high probability}
Let $P$ and $|J'|$ be fixed, $\bar{r}$ be any (arbitrarily large) positive integer, and $\{\alpha_j(r,k) \geq 0, j\in J'\}_{r,k=1}^{\bar{r},P}$ be any arbitrary collection satisfying $\sum_{j \in J'} \alpha_j(r,k)=1$. Let $|\widehat{w}| \leq \Gamma$, $|w^\star| \leq \Gamma$, and $\{\tilde{w}^r_k\}_{r,k} \subset W$, and define $\bar{a} := \max_{(r,k)} \sqrt{\sum_{j \in J'} \alpha_j^2(r,k)}$. Then, as long as Assumptions~\ref{assumption lipschitz} and \ref{assumption finite value} hold, we have $\forall \epsilon > 0$
\begin{align}\label{thm:statistical.convergence.byrdie.bound}
  \sup_{r,k} \left[h^r_k(\widehat{w}) - h^r_k(w^\star)\right] < \epsilon
\end{align}
with probability exceeding
\begin{align}\label{thm:statistical.convergence.byrdie.prob}
1 - 2\exp\left(-\frac{4|J'|N \epsilon^2}{c_1^2 |J'| \bar{a}^2 + \epsilon^2} + |J'|\log\left(\frac{c_2}{\epsilon}\right) + P\log\left(\frac{c_3}{\epsilon}\right)\right),
\end{align}
where $c_1 := 8C$, $c_2 := 24 C |J'|$, and $c_3 := 24 L' \Gamma P$.
\end{theorem}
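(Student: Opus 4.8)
The plan is to recognize \eqref{thm:statistical.convergence.byrdie.bound} as a \emph{uniform convergence} statement and reduce the excess statistical risk to a uniformly controlled empirical process. First I would exploit the optimality of $\widehat{w}$ for $H_k^r$: writing
$$h_k^r(\widehat{w}) - h_k^r(w^\star) = \big[h_k^r(\widehat{w}) - H_k^r(\widehat{w})\big] + \big[H_k^r(\widehat{w}) - H_k^r(w^\star)\big] + \big[H_k^r(w^\star) - h_k^r(w^\star)\big],$$
the middle bracket is $\leq 0$ by \eqref{define whatk}, so $h_k^r(\widehat{w}) - h_k^r(w^\star) \leq 2\sup_{|w'|\leq\Gamma}|H_k^r(w') - h_k^r(w')|$. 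The crucial simplification is that, because $\sum_{j\in J'}\alpha_j(r,k)=1$ and $R(w)$ is common to $\widehat f$ and $f$, the regularizer cancels in $H_k^r - h_k^r$; what remains is a weighted empirical average of the \emph{bounded} losses minus its expectation. Concretely, evaluating at the vector $w = \tilde{w}_k^r\vert_{[\tilde{w}_k^r]_k = w'}$, which lies in $W$ whenever $|w'|\leq\Gamma$, we get $H_k^r(w')-h_k^r(w') = \sum_{j\in J'}\alpha_j(r,k)\,\frac1N\sum_{n=1}^N\big(\ell(w,(x_{jn},y_{jn})) - \mathbb{E}[\ell(w,(x,y))]\big)$, each summand bounded in range by $C$ (Assumption~\ref{assumption finite value}).

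Next I would bound this deviation uniformly over everything it depends on. Since the dependence on $(r,k)$ enters only through the base point $\tilde{w}_k^r\in W$, the direction $k$, and the weights $\alpha(r,k)$, and since all $\alpha(r,k)$ lie in the set $A:=\{\alpha\in\mathbb{R}^{|J'|}_{\ge0}:\sum_j\alpha_j=1,\ \sum_j\alpha_j^2\le\bar a^2\}$, it suffices to control $\sup_{\alpha\in A}\sup_{w\in W}|D_\alpha(w)|$ with $D_\alpha(w):=\sum_j\alpha_j\frac1N\sum_n(\ell(w,(x_{jn},y_{jn}))-\mathbb{E}\ell(w,(x,y)))$; this automatically covers all $(r,k)$, which is exactly why no $\bar r$ appears in \eqref{thm:statistical.convergence.byrdie.prob}. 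For a \emph{fixed} pair $(\alpha,w)$, $D_\alpha(w)$ is a weighted mean of $|J'|N$ independent, zero-mean, bounded terms whose weights satisfy $\sum_{j,n}(\alpha_j/N)^2 = \tfrac1N\sum_j\alpha_j^2\le \bar a^2/N$; a Bernstein-type inequality then yields a tail of the form $2\exp(-\Theta(N\epsilon^2/(C^2\bar a^2 + \epsilon^2/|J'|)))$, i.e., the variance-aware rate that is faster by the factor reflected in $\sum_j\alpha_j^2$ (this is the source of the claimed $|J'|$-speedup when the weights are spread out).

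Finally I would upgrade the pointwise tail to the uniform one by a two-fold covering argument. Using that $\ell(\cdot,(x,y))$ is $L'$-Lipschitz on $W$ (the consequence of Assumption~\ref{assumption lipschitz} noted after it), $D_\alpha(\cdot)$ is Lipschitz in $w$, so an $\ell_2$-net of $W=[-\Gamma,\Gamma]^P$ at the appropriate resolution has cardinality $(c_3/\epsilon)^P$; and since $|D_\alpha(w)-D_{\alpha'}(w)|\le C\|\alpha-\alpha'\|_1$ (each inner empirical deviation is bounded by $C$), an $\ell_1$-net of $A$ at the appropriate resolution has cardinality $(c_2/\epsilon)^{|J'|}$. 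Applying the pointwise Bernstein bound on each net point, choosing the two net resolutions and the Bernstein deviation so that the three contributions to $\sup_{\alpha,w}|D_\alpha(w)|$ sum to less than $\epsilon/2$ (whence $2\sup|D_\alpha(w)|<\epsilon$), and union-bounding over the product net, gives the failure probability $2\exp\big(P\log(c_3/\epsilon) + |J'|\log(c_2/\epsilon) - \tfrac{4|J'|N\epsilon^2}{c_1^2|J'|\bar a^2+\epsilon^2}\big)$, which is the complement of \eqref{thm:statistical.convergence.byrdie.prob}. I expect the main obstacle to be the bookkeeping that preserves the variance-aware ($\bar a^2$) rate while simultaneously discretizing both $w$ and $\alpha$: in particular, choosing the net resolutions and deviation splits so that the Lipschitz-in-$\alpha$ constant ($\sim C$) and Lipschitz-in-$w$ constant ($\sim L'$) produce exactly $c_2 = 24C|J'|$ and $c_3 = 24L'\Gamma P$, and verifying that restricting to $A$ (rather than the full simplex) is legitimate because every algorithmic weight vector $\alpha(r,k)$ obeys $\sum_j\alpha_j^2(r,k)\le\bar a^2$ by definition of $\bar a$.
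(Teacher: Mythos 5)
Your proposal is correct and follows essentially the same route as the paper's proof: the same three-term decomposition exploiting the optimality of $\widehat{w}$ for $H_k^r$, a pointwise concentration bound whose exponent scales with $\sum_{j}\alpha_j^2(r,k)$, and a double covering/union-bound argument over the weight simplex and over $W$ to make the bound uniform in $(r,k)$ independently of $\bar{r}$. The only differences are cosmetic: the paper uses Hoeffding's inequality (which already yields the variance-aware $\bar{a}^2$ rate, since the summands' ranges are weighted by $\alpha_j/N$) together with an $\ell_2$-net of the simplex, whereas you invoke a Bernstein-type tail and an $\ell_1$-net of the norm-restricted simplex $A$.
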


The proof of this theorem is provided in Appendix~\ref{proof of lemma vector case high probability}. In words, ignoring minor technicalities that are resolved in Theorem~\ref{coordinate descent theorem} in the following, Theorem~\ref{lemma vector case high probability} states that the coordinate-wise outputs $\{\widehat{w}\}$ of ByRDiE for \emph{all} $(r,k)$ under Case~I achieve, with high probability, almost the same statistical risk as that obtained using the corresponding coordinate-wise statistical risk minimizers $\{w^\star\}$. We now leverage this result to prove that the iterates of ByRDiE at individual nodes achieve statistical risk that converges to the minimum statistical risk achieved by the statistical risk minimizer (vector) $w^*$ (cf.~\eqref{eqn:stochastic.problem}).

\begin{theorem}[Convergence Behavior of ByRDiE]\label{coordinate descent theorem}
Let Assumptions~\ref{assumption lipschitz}--\ref{assumption reduced graph} hold. Then, $\forall j \in J', \forall \epsilon > 0,$ and $T \to \infty,$ we have
\begin{align}\label{thm:byrdie.convergence.caseI}
  \lim_{\bar{r} \to \infty} \left[\mathbb{E}[f(w_j^{\bar{r},T},(x,y))] - \mathbb{E}[f(w^\ast,(x,y))]\right] < \epsilon
\end{align}
with probability exceeding
\begin{align}\label{thm:byrdie.convergence.caseI.prob}
1 - 2\exp\left(-\frac{4 |J'|N \epsilon^2}{{c_1'}^2 |J'| \bar{a}^2 + \epsilon^2} + |J'|\log\left(\frac{c_2'}{\epsilon}\right) + P\log\left(\frac{c_3'}{\epsilon}\right)\right),
\end{align}
where $c_1' := c_1 c_4$, $c_2' := c_2 c_4$, and $c_3' := c_3 c_4$ for $c_4 := 2PL\Gamma$, and $(\bar{a}, c_1, c_2, c_3)$ are as defined in Theorem~\ref{lemma vector case high probability}.
\end{theorem}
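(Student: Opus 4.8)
The plan is to show that, for $T\to\infty$, ByRDiE is an \emph{inexact cyclic coordinate descent} on the single fixed statistical risk $F(w):=\mathbb{E}[f(w,(x,y))]$, and then to convert the uniform per-coordinate suboptimality guaranteed by Theorem~\ref{lemma vector case high probability} into a bound on the global gap $\mathbb{E}[f(w_j^{\bar r,T},(x,y))]-\mathbb{E}[f(w^\ast,(x,y))]$. First I would use Lemma~\ref{scalar convergence} to pin down the trajectory: under Case~I each coordinate subproblem drives all nonfaulty nodes to the common value $\widehat{w}$, the empirical coordinate minimizer, so the nonfaulty estimates evolve as a well-defined cyclic coordinate-descent sequence whose common value at the start of subproblem $(r,k)$ is $\tilde{w}^r_k$. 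Before invoking Theorem~\ref{lemma vector case high probability}, I would verify that this trajectory never leaves $W$ (as asserted in the text, the iterates stay within $W$), so that the hypotheses $|\widehat{w}|\le\Gamma$, $|w^\star|\le\Gamma$, $\{\tilde{w}^r_k\}\subset W$ hold and Assumption~\ref{assumption finite value} is in force.

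The crux of the reduction is the observation that, although the empirical surrogate $H^r_k$ (and its weights $\alpha_j(r,k)$) drifts with $(r,k)$, the statistical risk $F$ is a \emph{single}, fixed, strictly convex and $L$-smooth function, and $h^r_k$ is exactly its restriction to coordinate $k$ through $\tilde{w}^r_k$. Theorem~\ref{lemma vector case high probability}, applied with parameter $\epsilon'$, then states that with the stated probability, \emph{uniformly over all} $(r,k)$, the ByRDiE update to $\widehat{w}$ is $\epsilon'$-suboptimal for $F$ along coordinate $k$, i.e.\ $h^r_k(\widehat{w})-\min_{w'}h^r_k(w')<\epsilon'$. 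Hence, on the high-probability event of Theorem~\ref{lemma vector case high probability}, ByRDiE is an $\epsilon'$-approximate cyclic coordinate descent on the fixed function $F$ over the compact set $W$.

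It then remains to prove that such approximate coordinate descent converges, as $\bar r\to\infty$, to a neighborhood of the unique minimizer $w^\ast$ of $F$. Using strict convexity and compactness of $W$, I would extract a limiting iterate $\bar w\in W$ that is $\epsilon'$-approximately coordinate-stationary, i.e.\ $F(\bar w)-\min_{w'}F(\bar w\vert_{[\bar w]_k=w'})\le\epsilon'$ for every $k$. Converting this coordinate-wise near-stationarity into a global bound is the delicate step: combining $L$-smoothness (to pass from the per-coordinate value gap to the partial derivatives $[\nabla F(\bar w)]_k$), convexity (so that $F(\bar w)-F(w^\ast)\le\nabla F(\bar w)^T(\bar w-w^\ast)$), and the $\ell_\infty$-diameter $2\Gamma$ of $W$, one obtains a bound of the form $F(\bar w)-F(w^\ast)\le c_4\,\epsilon'$, with $c_4=2PL\Gamma$ carrying the dependence on the dimension $P$, the smoothness $L$, and the coordinate range $2\Gamma$. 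Demanding $c_4\epsilon'<\epsilon$, i.e.\ applying Theorem~\ref{lemma vector case high probability} with parameter $\epsilon/c_4$, then reproduces exactly the probability in \eqref{thm:byrdie.convergence.caseI.prob} with $c_i'=c_ic_4$.

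The main obstacle is precisely this last conversion together with the convergence of the approximate scheme. Standard coordinate-descent convergence theorems do not apply verbatim, because each coordinate step only \emph{approximately} minimizes $F$ and, at the empirical level, targets a surrogate $H^r_k$ that changes from step to step; so one must argue directly that the trajectory settles into a neighborhood of $w^\ast$ rather than drifting or cycling. Equally delicate is the bookkeeping that links the per-coordinate statistical gap of Theorem~\ref{lemma vector case high probability} to the global gap with the \emph{explicit} constant $c_4=2PL\Gamma$, since it is this exact proportionality that makes the substitution $\epsilon\mapsto\epsilon/c_4$ yield the probability bound in the statement.
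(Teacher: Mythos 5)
Your proposal matches the paper's proof in all essentials: both reduce to the common trajectory $\widehat{w}^r$ of the nonfaulty nodes via Lemma~\ref{scalar convergence}, condition on the uniform high-probability event of Theorem~\ref{lemma vector case high probability} at scale $\epsilon' = \epsilon/c_4$, convert coordinate-wise $\epsilon'$-suboptimality of $h^r_k$ into the global statistical-risk gap via smoothness, convexity, and Cauchy--Schwarz with exactly the constant $c_4 = 2PL\Gamma$, and handle the hypotheses $|\widehat{w}^r_k|\leq\Gamma$, $\{\tilde{w}^r_k\}\subset W$ by an inductive argument. The one mechanical difference lies in how ``eventual approximate coordinate-stationarity'' is established: the paper uses a sufficient-decrease argument --- plugging $w' = \widehat{w}^{r-1}_k - \tfrac{1}{L}[\nabla h^r_k(\widehat{w}^{r-1}_k)]_k$ into the descent inequality shows that whenever $[\nabla h_k^r(\widehat{w}_k^{r-1})]_k^2 \geq 2L\epsilon'$ the risk strictly decreases, which can happen only finitely often since the risk is bounded below, yielding an $r_0$ past which all coordinate derivatives are small --- whereas you propose extracting a limit point by compactness of $W$. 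Your route is the more delicate of the two: with $\epsilon'$-inexact steps the per-cycle displacements need not vanish, so a limit point of the end-of-cycle iterates $\widehat{w}^r$ does not automatically inherit $\epsilon'$-near-stationarity in \emph{every} coordinate (the per-step guarantee is anchored at the intermediate points $\tilde{w}^r_k$ within the cycle, which may stay far from the limit); the paper's monotone-decrease formulation sidesteps this by never passing to a limit point at all. If you replace the compactness step with the sufficient-decrease argument, your proof coincides with the paper's.
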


The proof of this theorem is given in Appendix~\ref{proof.byrdie.convergence.caseI}. We now make a couple of remarks concerning Theorem~\ref{coordinate descent theorem}. First, note that the uniqueness of the minimum of strictly convex functions coupled with the statement of Theorem~\ref{coordinate descent theorem} guarantee that $\forall j \in J', w^{\bar{r},T}_j \to w^\ast$ with high probability.

Second, Theorem~\ref{coordinate descent theorem} helps crystallize the advantages of distributed learning over \emph{local learning}, in which nodes individually solve the empirical risk minimization problem using only their local data samples. Prior work on stochastic convex optimization (see, e.g.,~\cite[Theorem~5~and~(11)]{shalev2009stochastic}) tells us that, with high probability and ignoring the $\log$ terms, the gap between the statistical risk achieved by the empirical risk minimizer and the statistical risk minimizer scales as $\mathcal{O}\left(1/\sqrt{\text{\# of samples}}\right)$ in the centralized setting. This \emph{learning rate} translates into $\mathcal{O}(1/\sqrt{N})$ for local learning and $\mathcal{O}(1/\sqrt{MN})$ for the idealized centralized learning. In contrast, Theorem~\ref{coordinate descent theorem} can be interpreted as resulting in the following \emph{distributed} learning rate (with high probability):\footnote{We are once again ignoring the $\log$ terms in our discussion; it can be checked, however, that the $\log$ terms resulting from Theorem~\ref{coordinate descent theorem} match the ones in prior works such as \cite{shalev2009stochastic} on centralized learning.}
\begin{align}\label{eqn:effective.learning.rate}
  \mathbb{E}[f(w_j^{\bar{r},T},(x,y))] - \mathbb{E}[f(w^\ast,(x,y)) = \mathcal{O}\left(1/\sqrt{N_\text{eff}}\right),
\end{align}
where $N_\text{eff} := N/\bar{a}^2$ denotes the \emph{effective} number of training samples available during distributed learning.

In order to understand the significance of \eqref{eqn:effective.learning.rate}, notice that
\begin{align}\label{eqn:effective.number.samples}
  \frac{1}{M} \leq \frac{1}{\vert J'\vert} \leq \bar{a}^2 \leq 1 \ \Rightarrow \ N \leq N_\text{eff} \leq |J'| N \leq NM.
\end{align}
In particular, $\bar{a}^2 = 1/M$ if and only if there are no Byzantine failures in the network, resulting in the coordinate descent-based distributed learning of $\mathcal{O}\left(1/\sqrt{N M}\right)$, which matches the centralized learning rate. (This, to the best of our knowledge, is the first result on the explicit learning rate of coordinate descent-based distributed learning.) In the presence of Byzantine nodes, however, the \emph{maximum} number of \emph{trustworthy} samples in the network is $|J'| N$, and \eqref{eqn:effective.learning.rate} and \eqref{eqn:effective.number.samples} tell us that the learning rate of ByRDiE in this scenario will be somewhere between the idealized learning rate of $\mathcal{O}\left(1/\sqrt{|J'|N}\right)$ and the local learning rate of $\mathcal{O}\left(1/\sqrt{N}\right)$.

Our discussion so far has focused on the rate of statistical convergence (i.e., learning rate) of ByRDiE. The proof of Theorem~\ref{coordinate descent theorem}, however, also contains within itself the algorithmic rate of convergence for ByRDiE. We state this convergence rate in terms of the following theorem, which uses the notation $\bar{f}^0$ to denote the starting statistical risk $\mathbb{E}[f(0,(x,y))]$, $\bar{f}^*$ to denote the minimum statistical risk $\mathbb{E}[f(w^\ast,(x,y))]$ and $1 - \delta(\epsilon, N, \bar{a})$ to express the probability expression in \eqref{thm:byrdie.convergence.caseI.prob}.
\begin{theorem}[Algorithmic Convergence Rate for ByRDiE]\label{thm:algorithmic.convergence.rate}
Let Assumptions~\ref{assumption lipschitz}--\ref{assumption reduced graph} hold. Then, $\forall j \in J', \forall r \in \mathbb{N}, \forall \epsilon > 0,$ and $T \to \infty,$ we get with probability exceeding $1 - \delta(\epsilon, N, \bar{a})$ that
\begin{align}\label{thm:byrdie.algo.convergence.caseI}
  &\mathbb{E}\left[f(w_j^{r,T},(x,y))\right] - \bar{f}^* <\nonumber\\
    &\qquad\qquad\qquad\qquad\max\left\{\left(\bar{f}^0 - \bar{f}^*\right)\left(1 - \frac{r \epsilon}{c_5}\right), \epsilon\right\},
\end{align}
where the parameter $\delta(\epsilon, N, \bar{a})$ is given by
\begin{align}
2\exp\left(-\frac{4 |J'|N \epsilon^2}{{c_1'}^2 |J'| \bar{a}^2 + \epsilon^2} + |J'|\log\left(\frac{c_2'}{\epsilon}\right) + P\log\left(\frac{c_3'}{\epsilon}\right)\right),
\end{align}
while $c_5 := c_4 L' \sqrt{P} \gamma^*$, and the constants $c_1', c_2', c_3'$, and $c_4$ are as defined in Theorem~\ref{coordinate descent theorem}.
\end{theorem}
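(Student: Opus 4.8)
The plan is to reinterpret ByRDiE under Case~I as an \emph{inexact cyclic coordinate descent} on the statistical risk $F(w) := \mathbb{E}[f(w,(x,y))]$ and to extract a per-sweep sufficient-decrease recursion whose unrolling gives the stated bound. First note that $F$ is convex (as $\ell$ is convex and $R$ strictly convex) and $L$-smooth by Assumption~\ref{assumption lipschitz}, and that the initialization at $w=0$ together with the sublevel-set definition of $W$ at level $c_0 = \bar{f}^0$ keeps every iterate inside $W$, so that $\|w_j^{r,T}\|_\infty \le \Gamma$ and $\|w^\ast\|_\infty \le \gamma^\ast$ throughout. By Lemma~\ref{scalar convergence}, as $T\to\infty$ all nonfaulty nodes share a common vector after each coordinate sweep; I would write $w^r$ for this common iterate and set $\Delta_r := F(w^r) - \bar{f}^*$, so that $\Delta_0 = \bar{f}^0 - \bar{f}^*$.

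Next I would condition on the single high-probability event of Theorem~\ref{lemma vector case high probability}, which holds \emph{uniformly} over all $(r,k)$ with probability at least $1-\delta(\epsilon,N,\bar{a})$; because this event is uniform in $(r,k)$, no further union bound is needed and the probability in the statement is inherited verbatim. On this event, for every $(r,k)$ we have $h_k^r(\widehat{w}) - h_k^r(w^\star) < \epsilon$. Since $h_k^r(w') = F(\tilde{w}_k^r\vert_{[\tilde{w}_k^r]_k = w'})$, this says precisely that when ByRDiE overwrites coordinate $k$ by $\widehat{w}$ it attains a value of $F$ within $\epsilon$ of what exact coordinate-$k$ minimization would attain.

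The core step is a one-sweep descent inequality. Using $L$-smoothness, exact minimization of a single coordinate from a point $u$ lowers $F$ by at least $\tfrac{1}{2L}[\nabla F(u)]_k^2$; combined with the $\epsilon$-inexactness above and summed over the $P$ coordinates of sweep $r$, this yields a lower bound on $F(w^r)-F(w^{r+1})$ in terms of squared partial derivatives along the sweep minus an accumulated inexactness term. I would then invoke convexity, $\Delta_r \le \langle \nabla F(w^r),\, w^r - w^\ast\rangle \le \|\nabla F(w^r)\|\,\|w^r-w^\ast\|$, and bound $\|w^r-w^\ast\|$ by the diameter of $W$ (contributing the $\sqrt{P}$, $\Gamma$, and $\gamma^\ast$ factors) to convert gradient-based progress into progress measured in $\Delta_r$. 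The target is a recursion of the form: while $\Delta_r \ge \epsilon$, one sweep decreases $F$ by at least $\epsilon\Delta_0/c_5$ with $c_5 = c_4 L'\sqrt{P}\,\gamma^\ast$, where $L'$ enters through the Lipschitz bound $|F(w)-F(w')|\le L'\|w-w'\|$ used to control the inexactness and $c_4 = 2PL\Gamma$ captures the smoothness/diameter terms.

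Telescoping this recursion over $r$ sweeps then yields $\Delta_r < \Delta_0(1 - r\epsilon/c_5)$ as long as the right-hand side exceeds $\epsilon$, and once $\Delta_r$ first drops below $\epsilon$ the same sufficient-decrease and inexactness control show it cannot subsequently exceed $\epsilon$; combining the two cases gives $\mathbb{E}[f(w_j^{r,T},(x,y))] - \bar{f}^* < \max\{\Delta_0(1-r\epsilon/c_5),\epsilon\}$ on the event of probability $1-\delta(\epsilon,N,\bar{a})$. The main obstacle is the one-sweep inequality itself: I must simultaneously (i) propagate the per-coordinate $\epsilon$-inexactness of Theorem~\ref{lemma vector case high probability} through a full sweep without the $P$ accumulated errors swamping the descent, (ii) control the discrepancy between the partial derivatives seen at intermediate points of the cyclic sweep and the gradient $\nabla F(w^r)$ at its start, which forces the use of smoothness, and (iii) combine these with convexity and the $W$-diameter bounds so that the constant collapses exactly to $c_5$ and the decrease is genuinely arithmetic (hence the linear-in-$r$ form) rather than merely sublinear.
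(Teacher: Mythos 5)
Your proposal is correct and follows essentially the same route as the paper's proof in Appendix~D: condition on the uniform event of Theorem~\ref{lemma vector case high probability} (with accuracy $\epsilon' = \epsilon/c_4$, which is what makes the probability $1-\delta(\epsilon,N,\bar{a})$ match), use the smoothness-based per-coordinate sufficient-decrease inequality together with convexity and Cauchy--Schwarz over the diameter of $W$ to show each sweep decreases $\bar{f}$ by at least $\epsilon'$ whenever the excess risk exceeds $\epsilon$, telescope over $r$, and finish with $\bar{f}^0 - \bar{f}^* \leq L'\sqrt{P}\gamma^*$ to produce $c_5$. The only cosmetic difference is that you fold the bound $\bar{f}^0-\bar{f}^* \le L'\sqrt{P}\gamma^*$ into the per-sweep decrease rather than applying it after telescoping, which yields the identical final expression.
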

The proof of this theorem is given in Appendix~\ref{proof.thm:algorithmic.convergence.rate}. It can be seen from Theorem~\ref{thm:algorithmic.convergence.rate} that, with high probability, ByRDiE requires $r = \mathcal{O}\left(1/\epsilon\right)$ iterations to bring the \emph{excess risk} $\mathbb{E}[f(w_j^{r,T},(x,y))] - \bar{f}^*$ down to $\epsilon$. In terms of minimization of the statistical risk, therefore, ByRDiE achieves a sublinear rate of algorithmic convergence with high probability, even in the presence of Byzantine failures in the network.

\subsection{Theoretical Guarantees: Convergence for $T=1$}
Case~I for ByRDiE, in which $T \to \infty$, is akin to doing exact line search during minimization of each coordinate, which is one of the classic ways of implementing coordinate descent. Another well-adopted way of performing coordinate descent is to take only one step in the direction of descent in a dimension and then switch to another dimension~\cite{wright2015coordinate}. Within the context of ByRDiE, this is equivalent to setting $T=1$ (Case~II); our goal here is to provide convergence guarantees for ByRDiE in this case. Our analysis in this section uses the compact notation $w_j^r := w_j^{r,1} \equiv w_j^{r+1}(1)$. In the interest of space, and since the probabilistic analysis of this section is similar to that of the previous section, our probability results are stated asymptotically here, rather than in terms of precise bounds.

The starting point of our discussion is Theorem~\ref{lemma consensus}. Recall from Sec.~\ref{ssec:consensus} the definition of the index $m := (r-1)T + t$. When $T=1$, we have $r=m$ and therefore $[w^r_j]_k\to v^k(r)$ as $r\to\infty$ according to Theorem~\ref{lemma consensus}. In order to provide convergence guarantee, we only need to show that $v^k(r) \xrightarrow{r,N} [w^\ast]_k$ in probability. To this end, we define a sequence $Q(q) \in \mathbb{R}^P$ as follows: $\forall k, [Q(1)]_k:=v^k(1)$, while for the parameterized index $q=(r-1)P+k \neq 1$, $Q(q)$ is obtained by replacing $[Q(q-1)]_k$ with $v^k(r)$ and keeping the other dimensions fixed. Similarly, we define a sequence $\eta(q)$ satisfying $\eta(q)=\rho(r)$ for $Pr\leq q<(P+1)r$. Notice that $0<\eta(q+1)\leq\eta(q)$, $\sum_q \eta(q)=\infty$ and $\sum_q \eta^2(q)<\infty$. Since we have from \eqref{define vt} that $v^k(r+1)=v^k(r)-\rho(r)\sum_{i=1}^{\vert J'\vert}[\pi(r+1)]_i[\nabla \widehat{f}(w_i^r,S_i)]_k$, we can write the following iteration:
\begin{align}\label{update Q}
  Q(q+1)=Q(q)-\eta(q)\sum\limits_{i=1}^{\vert J'\vert}[\pi(r+1)]_i[\nabla \widehat{f}(w_i^r,S_i)]_ke_k,
\end{align}
where $e_k$ denotes the standard basis vector (i.e., it is zero in every dimension except $k$ and $[e_k]_k=1$) and the relationship between $r,k$, and $q$ is as defined earlier. The sequence $Q(q)$ effectively helps capture update of the optimization variable after each coordinate-wise update of ByRDiE. In particular, we have the following result concerning the sequence $Q(q)$.

\begin{lemma}\label{lemma Q converge}
Let Assumptions \ref{assumption lipschitz}, \ref{assumption finite value}, and \ref{assumption reduced graph} hold for ByRDiE and choose $T=1$. We then have that $Q(q) \xrightarrow{q,N} w^\ast$ in probability.
\end{lemma}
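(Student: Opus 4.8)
The plan is to read the recursion \eqref{update Q} as an \emph{inexact cyclic coordinate gradient-descent} scheme on the statistical risk $\bar{F}(w):=\mathbb{E}[f(w,(x,y))]$, which is strictly convex (because $R$ is strictly convex) with unique minimizer $w^\ast$, and then to pair a diminishing-stepsize (Robbins--Monro-type) algorithmic argument with the uniform statistical-convergence machinery behind Theorem~\ref{lemma vector case high probability}. Concretely, I would write the driving term $\sum_{i}[\pi(r+1)]_i[\nabla\widehat{f}(w_i^r,S_i)]_k$ as $[\nabla\bar{F}(Q(q))]_k+\xi(q)+\zeta(q)$, so that \eqref{update Q} becomes a single coordinate step of gradient descent on $\bar{F}$ perturbed by $\eta(q)(\xi(q)+\zeta(q))e_k$, where $\xi(q)$ is a \emph{consensus/staggering} error and $\zeta(q)$ is a \emph{statistical} error.

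The first substantive step is to bound the two error terms. Introducing the weighted empirical risk $F_r(w):=\sum_i[\pi(r+1)]_i\widehat{f}(w,S_i)$, I would write $\xi(q)=\sum_i[\pi(r+1)]_i([\nabla\widehat{f}(w_i^r,S_i)]_k-[\nabla\widehat{f}(Q(q),S_i)]_k)$ and control it by the $L$-Lipschitz gradient bound (Assumption~\ref{assumption lipschitz}) together with the consensus-rate estimate $\|w_i^r-\bar{V}(r)\|=\mathcal{O}(\sqrt{P}\rho(r))$ (Theorem~\ref{lemma:consensus.rate}) and the fact that consecutive consensus vectors differ by one stepsize (so $\|Q(q)-\bar{V}(r)\|=\mathcal{O}(\sqrt{P}\rho(r))$ as well). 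This yields $|\xi(q)|=\mathcal{O}(\eta(q))$, which is harmless because it multiplies $\eta(q)$ in the Lyapunov recursion and is therefore summable via $\sum_q\eta^2(q)<\infty$. The remaining term $\zeta(q)=[\nabla F_r(Q(q))-\nabla\bar{F}(Q(q))]_k$ is the empirical-versus-statistical gradient gap; using boundedness of the gradients on the compact set $W$ (Assumption~\ref{assumption finite value} plus Lipschitz continuity) and the i.i.d.\ sampling, the same covering-number/concentration argument used for Theorem~\ref{lemma vector case high probability}, applied at the gradient level and uniformly over $w\in W$ and over all admissible weight vectors $\pi(r+1)$, gives $\sup_q|\zeta(q)|\le\epsilon_N$ with high probability, where $\epsilon_N\to 0$ as $N\to\infty$.

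With the errors in hand, I would run a Lyapunov argument on $\|Q(q)-w^\ast\|^2$ accumulated over a full cycle of $P$ coordinates (one increment of $r$). Expanding the square and using convexity of $\bar{F}$ to sign the dominant cross term, $L$-smoothness to move each partial-derivative evaluation from the staggered iterate back to the cycle's starting point at cost $\mathcal{O}(\eta)$, and boundedness of the coordinate gradients, I expect a recursion of the form $\|Q((r+1)P)-w^\ast\|^2\le\|Q(rP)-w^\ast\|^2-c\,\rho(r)(\bar{F}(\bar{V}(r))-\bar{F}(w^\ast))+\mathcal{O}(\rho^2(r))+\mathcal{O}(\rho(r)\epsilon_N)$. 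Summing over $r$ and invoking $\sum_r\rho(r)=\infty$ and $\sum_r\rho^2(r)<\infty$ forces $\bar{F}(Q(q))\to\bar{F}(w^\ast)$ up to an $\mathcal{O}(\epsilon_N)$ floor; strict convexity of $\bar{F}$ and compactness of $W$ then upgrade this to $\|Q(q)-w^\ast\|\to g(\epsilon_N)$ for some $g$ with $g(\epsilon_N)\to 0$. Letting $q\to\infty$ for fixed $N$ and then $N\to\infty$ delivers the claimed $Q(q)\xrightarrow{q,N}w^\ast$ in probability.

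The main obstacle is the coupling of the two limits caused by $\zeta(q)$: unlike the consensus error, it does \emph{not} vanish in $q$, so for fixed $N$ the iterates reach only an $\mathcal{O}(\epsilon_N)$ neighborhood of $w^\ast$, and closing this gap is precisely what forces the second ($N\to\infty$) limit encoded in the double index $\xrightarrow{q,N}$. Making this rigorous requires a uniform (over all iterations, all iterates in $W$, and all weight vectors) high-probability control of $\zeta(q)$ that is compatible with the non-summable/square-summable stepsize bookkeeping, which is where I expect to lean hardest on the concentration estimates developed for Theorem~\ref{lemma vector case high probability}. A secondary nuisance is that the hypotheses supply only strict rather than strong convexity, so the final passage from function-value convergence to iterate convergence must exploit compactness of $W$ rather than a quadratic-growth bound.
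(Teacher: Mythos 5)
Your overall plan is viable and genuinely departs from the paper's argument in its convergence engine, even though the two proofs share the same skeleton. Like the paper, you decompose the driving term in \eqref{update Q} into a statistical error (empirical-vs-statistical gradient gap, controlled by the covering-number concentration behind Theorem~\ref{lemma vector case high probability}) and a consensus/staggering error (the paper's $E(q)$, controlled via Lipschitz gradients and consensus; the paper in fact only needs $[E(q)]_k$ and $[E(q)]_k/\eta(q)\to 0$ via Theorem~\ref{lemma consensus}, whereas your quantitative $\mathcal{O}(\eta(q))$ bound via Theorem~\ref{lemma:consensus.rate} is finer but not essential). Where you diverge is the Lyapunov function: you run a Robbins--Monro-style recursion on $\|Q(q)-w^\ast\|^2$ accumulated over coordinate cycles and telescope, then pass from risk convergence to iterate convergence by an elementary strict-convexity-plus-compactness subsequence argument. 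The paper instead uses $\bar{f}(Q(q))$ itself as the Lyapunov function: it derives a sufficient condition (its \eqref{condition combined.2}) under which $\bar{f}(Q(q))$ is strictly decreasing whenever $\vert[\nabla\bar{f}(Q(q))]_k\vert$ exceeds a threshold $\epsilon_\nabla$, uses monotone convergence plus a contradiction/telescoping argument to force the coordinate gradients below $\epsilon_\nabla$, converts that to a risk gap via convexity and Cauchy--Schwarz, and finally invokes a strong-minimizer result from the literature to get iterate convergence. Your route is the more standard one for inexact (sub)gradient methods and avoids that external citation; the paper's route avoids ever needing a signed cross term over a full staggered cycle.

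There is, however, one genuine gap. Your uniform bound $\sup_q\vert\zeta(q)\vert\le\epsilon_N$ comes from a covering of the compact set $W$, so it is only valid \emph{while} $Q(q)\in W$; but keeping $Q(q)$ inside $W$ (or any fixed compact set) is precisely what your Lyapunov recursion is supposed to deliver, and that recursion in turn invokes the concentration bound, the bounded-gradient constants, and (in your last step) ``compactness of $W$'' as if the iterates automatically live there. This circularity must be broken explicitly. The paper does exactly this in the last paragraph of its proof: it flags $\exists q_0':\forall q\ge q_0',\ Q(q)\in W$ as an assumption requiring validation, and justifies it using the sublevel-set definition of $W$, the established monotonicity of $\bar{f}(Q(q))$, and the probabilistic ``onion peeling'' induction from the proof of Theorem~\ref{coordinate descent theorem}; your proposal never addresses it. A secondary, more routine issue: with the persistent $\mathcal{O}(\rho(r)\epsilon_N)$ term, your telescoping sum directly controls only $\liminf_r\bigl[\bar{F}(Q(rP))-\bar{F}(w^\ast)\bigr]$; upgrading this to a statement about the full limit (needed for convergence of the whole sequence $Q(q)$) requires an additional argument showing the iterates cannot repeatedly escape the $\mathcal{O}(\epsilon_N)$-suboptimal set under diminishing steps. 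Both gaps are fixable, but as written the proof is incomplete at exactly the points the paper takes care to handle.
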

The proof of this lemma is provided in Appendix~\ref{proof of lemma Q convergence}. We are now ready to state the convergence result for ByRDiE under Case~II (i.e, $T=1$).
\begin{theorem}[Asymptotic Convergence of ByRDiE]\label{theorem vector case 2}
Let Assumptions \ref{assumption lipschitz}, \ref{assumption finite value}, and \ref{assumption reduced graph} hold for ByRDiE and choose $T=1$. Then, $\forall j \in J'$, $w_j^{\bar{r}} \xrightarrow{\bar{r},N} w^\ast$ in probability.
\end{theorem}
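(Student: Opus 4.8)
The plan is to combine the consensus guarantees of Section~\ref{ssec:consensus} with the convergence of the auxiliary sequence $Q(q)$ from Lemma~\ref{lemma Q converge}, routing the latter to the actual iterates $w_j^{\bar{r}}$ through the consensus vector $\bar{V}(r)$. First I would invoke the consensus results with $T=1$: Theorem~\ref{lemma consensus} gives $[w_j^{r}]_k \to v^k(r) = [\bar{V}(r)]_k$ as $r \to \infty$, while Theorem~\ref{lemma:consensus.rate} sharpens this to $\|w_j^{r} - \bar{V}(r)\| = \mathcal{O}(\sqrt{P}\,\rho(r))$. Because $P$ is fixed, the iterates remain in the compact set $W$, and the gradients are consequently bounded uniformly (by $L'$-Lipschitz continuity of $f$), the implied constant does not depend on the sample realization or on $N$; hence for every $\epsilon > 0$ there exists $r_1$ with $\|w_j^{r} - \bar{V}(r)\| < \epsilon/2$ for all $r \ge r_1$, \emph{deterministically}.

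The key observation is that $\bar{V}(r)$ is a subsequence of $Q(q)$. Unwinding the block-wise definition of $Q(q)$, at the index $q = rP = (r-1)P + P$ every coordinate $k \in \{1,\dots,P\}$ has already been replaced by its iteration-$r$ consensus value $v^k(r)$, so that $Q(rP) = \bar{V}(r)$. Applying Lemma~\ref{lemma Q converge}, namely $Q(q) \xrightarrow{q,N} w^\ast$ in probability, along the subsequence $q = rP$ then yields $\bar{V}(r) = Q(rP) \xrightarrow{r,N} w^\ast$ in probability.

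Finally I would assemble the two ingredients via the triangle inequality $\|w_j^{\bar{r}} - w^\ast\| \le \|w_j^{\bar{r}} - \bar{V}(\bar{r})\| + \|\bar{V}(\bar{r}) - w^\ast\|$. Fix $\epsilon, \delta > 0$. The first term is $< \epsilon/2$ for $\bar{r} \ge r_1$ deterministically, and Lemma~\ref{lemma Q converge} along $q = \bar{r}P$ furnishes $r_2$ and $N_0$ such that $\Pr(\|\bar{V}(\bar{r}) - w^\ast\| > \epsilon/2) < \delta$ whenever $\bar{r} \ge r_2$ and $N \ge N_0$. Thus, for $\bar{r} \ge \max\{r_1, r_2\}$ and $N \ge N_0$ we get $\Pr(\|w_j^{\bar{r}} - w^\ast\| > \epsilon) < \delta$ for every nonfaulty node $j$, which is precisely convergence in probability under the joint $(\bar{r}, N)$ limit.

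I expect the main obstacle to be the clean combination of the \emph{two distinct modes of convergence}: the consensus bound is a deterministic rate in $r$ for fixed samples, whereas Lemma~\ref{lemma Q converge} is a joint-in-$(q,N)$ statement in probability. Splicing them requires the consensus-rate constant to be uniform over sample realizations and in $N$, which hinges on the iterates staying within $W$ and the gradients being bounded there. A secondary, bookkeeping-level subtlety is the identification $Q(rP) = \bar{V}(r)$: it is conceptually immediate but must be verified against the exact indexing $q = (r-1)P + k$ of the $Q$-sequence, since an off-by-one shift in the coordinate counting would invalidate the subsequence argument.
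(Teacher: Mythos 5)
Your proposal is correct and follows essentially the same route as the paper's own proof: invoke Theorem~\ref{lemma consensus} (with $T=1$) to get $[w_j^{\bar{r}}]_k \to v^k(\bar{r})$, then observe that the consensus values $v^k(\bar{r})$ are embedded in the sequence $Q(q)$ so that Lemma~\ref{lemma Q converge} gives $v^k(\bar{r}) \xrightarrow{\bar{r},N} [w^\ast]_k$ in probability. Your additional care---the explicit identification $Q(\bar{r}P) = \bar{V}(\bar{r})$, the use of the deterministic consensus rate from Theorem~\ref{lemma:consensus.rate}, and the triangle-inequality splicing of the deterministic and in-probability limits---merely spells out details the paper leaves implicit.
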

\begin{proof}
We have from Theorem~\ref{lemma consensus} that $\forall j \in J', [w_j^{\bar{r}}]_k \xrightarrow{\bar{r}} v^k(\bar{r})$ for all $k \in \{1,2,\dots,P\}$. The definition of $Q(q)$ along with Lemma~\ref{lemma Q converge} also implies that $v^k(\bar{r}) \xrightarrow{\bar{r},N} [w^\ast]_k$ in probability. This completes the proof of the theorem.
\end{proof}

\subsection{How to Choose the Parameter $T$ in ByRDiE?}\label{ssec:choice.T}
The parameter $T$ in ByRDiE trades off consensus among the nonfaulty nodes and the convergence rate as a function of the number of \emph{communication iterations} $t_c(r,k,t)$, defined as
\begin{align}
  t_c := (r-1)TP + [(k-1)T + t].
\end{align}
In particular, given a fixed $T$, each iteration $r$ of ByRDiE involves $TP$ (scalar-valued) communication exchanges among the neighboring nodes. In the previous section, we have provided theoretical guarantees for the two extreme cases of $T\to\infty$ and $T=1$. In the limit of large $\bar{r}$, our results establish that both extremes result in consensus and convergence to the statistical risk minimizer. In practice, however, different choices of $T$ result in different behaviors as a function of $t_c (\equiv t_c(r,k,t))$, as discussed in the following and as illustrated in our numerical experiments in the next section.

When $T$ is large, the two time-scale nature of ByRDiE ensures the disagreement between nonfaulty nodes does not become too large in the initial stages of the algorithm; in particular, the larger the number of iterations $T$ in the inner loop, the smaller the disagreement among the nonfaulty nodes at the beginning of the algorithm. Nonetheless, this comes at the expense of slower convergence to the desired minimizer as a function of the number of communication iterations $t_c$.

On the other hand, while choosing $T=1$ also guarantees consensus among nonfaulty nodes, it only does so asymptotically (cf.~Theorem~\ref{lemma consensus}). Stated differently, ByRDiE cannot guarantee in this case that the disagreement between nonfaulty nodes will be small in the initial stages of the algorithm (cf.~Theorem~\ref{lemma:consensus.rate}). This tradeoff between small consensus error and slower convergence (as a function of communication iterations $t_c$) should be considered by a practitioner when deciding the value of $T$. We conclude by noting that the different nature of the two extreme cases requires markedly different proof techniques, which should be of independent interest to researchers.

\begin{remark}
Our discussion so far has focused on the use of a \emph{static} parameter $T$ within ByRDiE. Nonetheless, it is plausible that one could achieve somewhat better tradeoffs between consensus and convergence through the use of an adaptive parameter $T_r$ in lieu of $T$ that starts with a large value and gradually decreases as $r$ increases. Careful analysis and investigation of such an adaptive two-time scale variant of ByRDiE, however, is beyond the scope of this paper.
\end{remark}

\section{Numerical Results}\label{section numerical}
In this section, we validate our theoretical results and make various observations about the performance of ByRDiE using two sets of numerical experiments. The first set of experiments involves learning of a binary classification model from the \emph{infinite MNIST} dataset\footnote{\tt https://leon.bottou.org/projects/infimnist} that is distributed across a network of nodes. This set of experiments fully satisfies all the assumptions in the theoretical analysis of ByRDiE. The second set of experiments involves training of a small-scale neural network for classification of the \emph{Iris} dataset~\cite{dua2017} distributed across a network. The learning problem in this case corresponds to a nonconvex one, which means this set of experiments does not satisfy the main assumptions of our theorems. Nonetheless, we show in the following that ByRDiE continues to perform well in such distributed nonconvex learning problems.

\subsection{Distributed SVM Using Infinite MNIST Dataset}
We consider a distributed linear binary classification problem involving MNIST handwritten digits dataset. The (infinite) MNIST dataset comprises images of handwritten digits from `0' to `9'. Since our goal is demonstration of the usefulness of ByRDiE in the presence of Byzantine failures, we focus only on distributed training of a linear support vector machine (SVM) for classification between digits `5' and `8', which tend to be the two most inseparable digits. In addition to highlighting the robustness of ByRDiE against Byzantine failures in this problem setting, we evaluate its performance for different choices of the parameters $T$, $N$, and $b$.

\begin{figure}[t]
\centering
\includegraphics[width=.95\columnwidth]{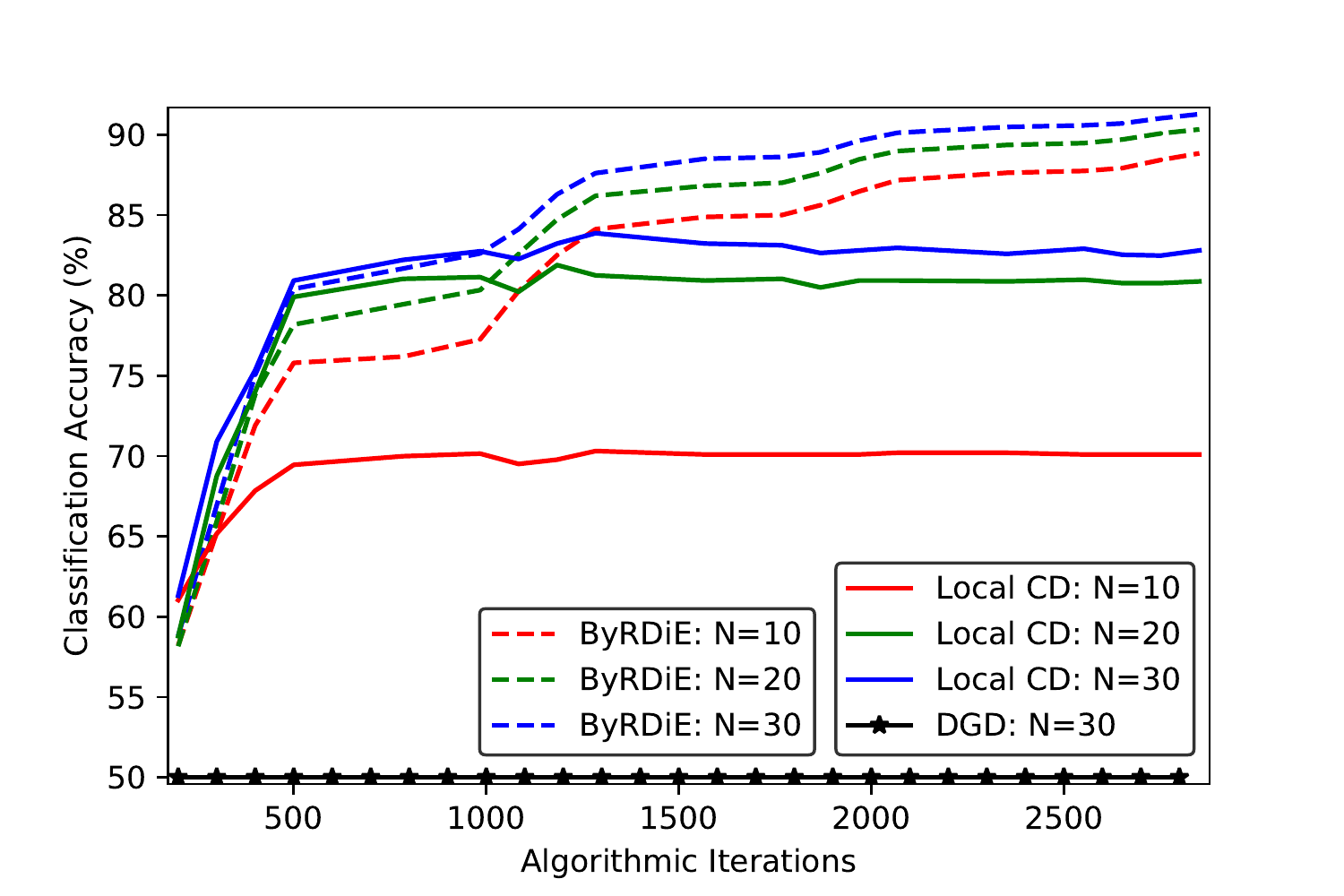}
\caption{Average classification accuracy of ByRDiE, local CD, and DGD (on test data) for different values of $N$ as a function of the number of algorithmic iterations for distributed training of a linear SVM on (infinite) MNIST dataset.}\label{fig: varying_N}
\end{figure}

In terms of the experimental setup, we generate Erd\H{o}s--R\'{e}nyi networks ($p = 0.5$) of $M$ nodes, $b$ of which are randomly chosen to be Byzantine nodes. All nonfaulty nodes are allocated $N$ samples---equally divided between the two classes---from the dataset, while a Byzantine node broadcasts random data uniformly distributed between $0$ and $1$ to its neighbors in each iteration. When running ByRDiE algorithm, each node updates one dimension $T$ times before proceeding to the next dimension. All tests are performed on the same test set with 1000 samples of digits `5' and `8' each.

\begin{figure}[t]
\centering
\begin{subfigure}{.95\columnwidth}
  \centering
  \includegraphics[width=\linewidth]{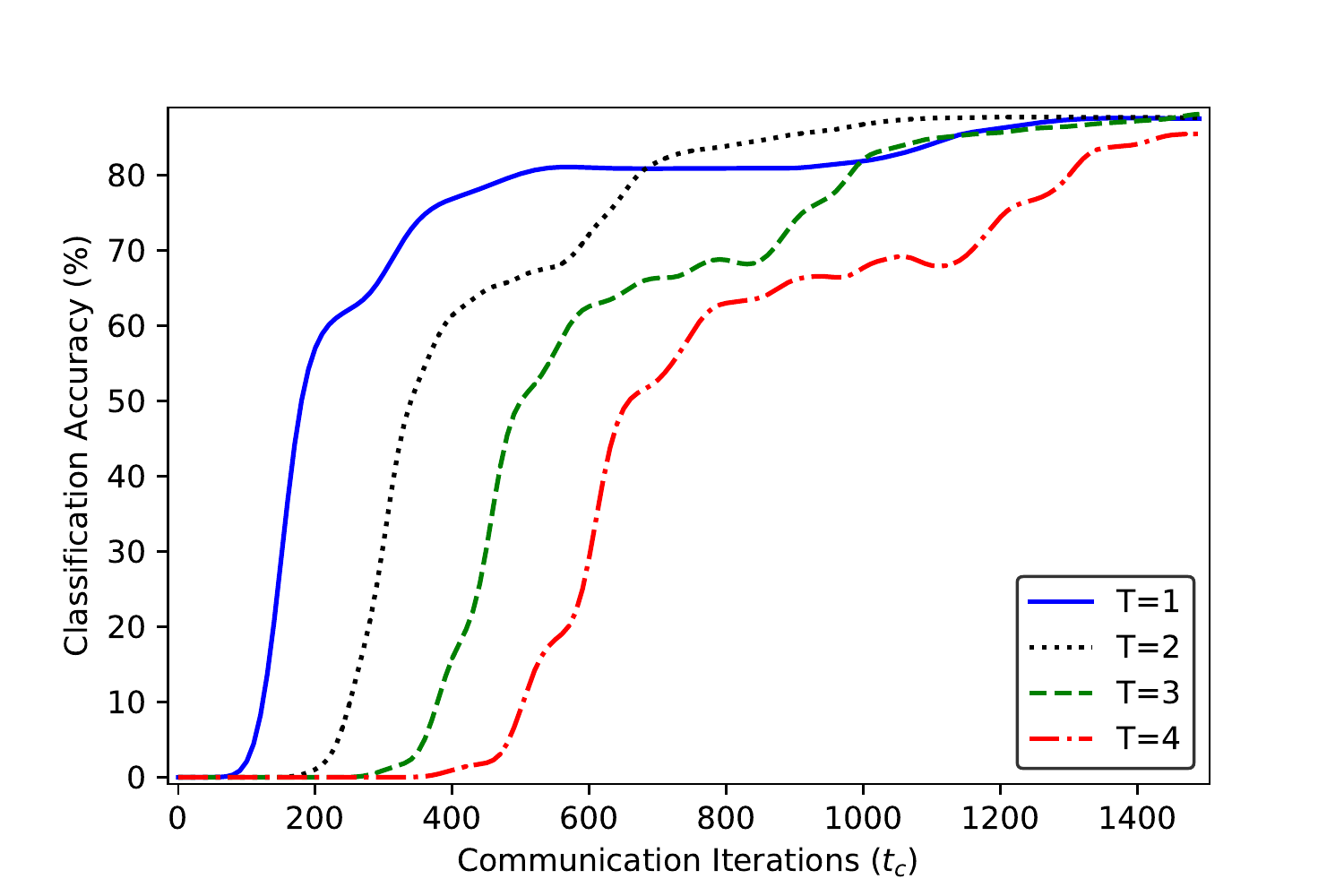}
  \caption{Classification accuracy for different values of $T$}
  \label{fig: varying_T_accuracy}
\end{subfigure}
\\
\vspace{\baselineskip}
\begin{subfigure}{.95\columnwidth}
  \centering
  \includegraphics[width=\linewidth]{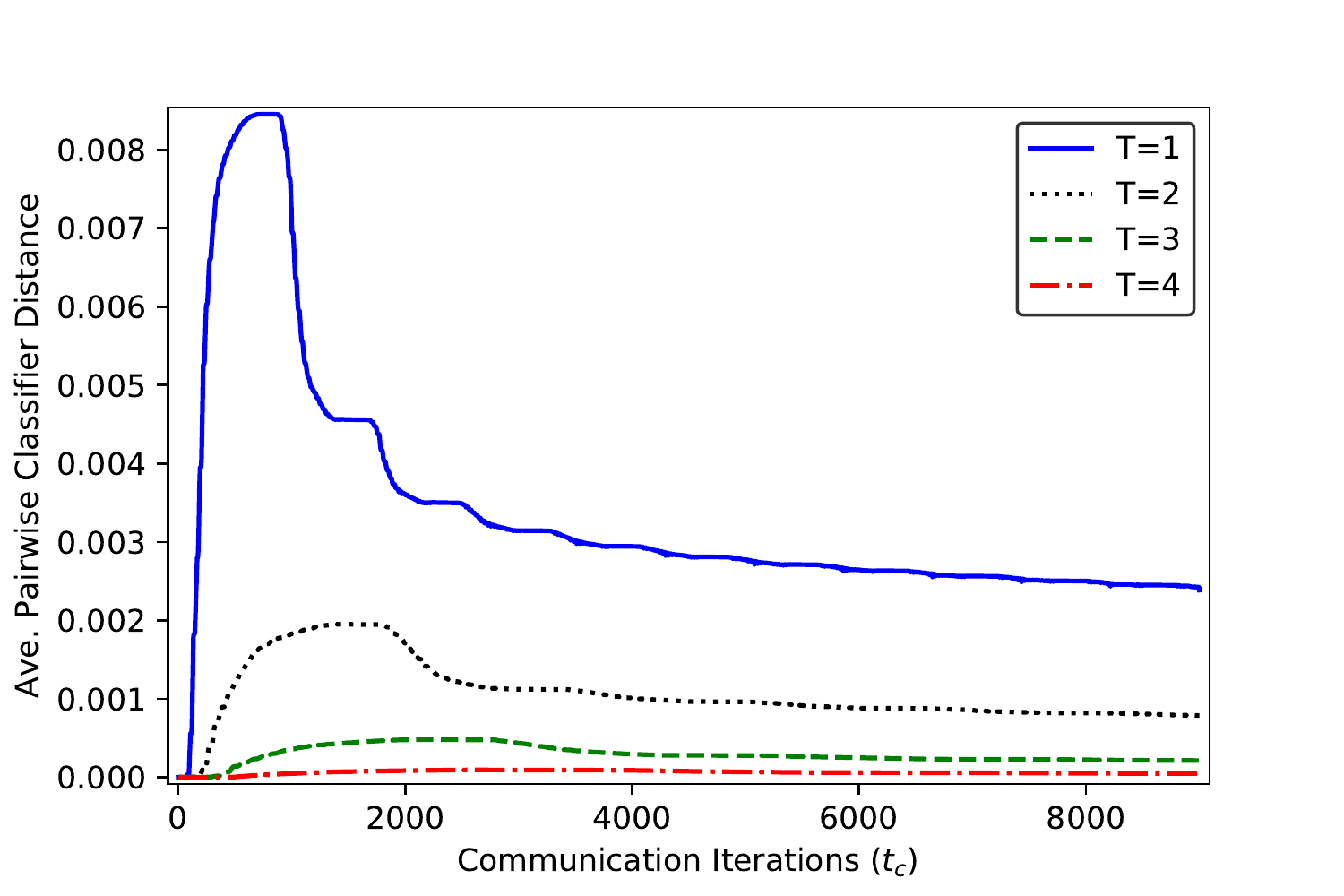}
  \caption{Consensus behavior for different values of $T$}
  \label{fig: varying_T_consensus}
\end{subfigure}
\caption{Convergence and consensus behavior of ByRDiE as a function of the number of communication iterations $t_c$ for different values of the parameter $T$. All plots correspond to distributed training of a linear SVM using the MNIST dataset.}
\label{fig: varying_T}
\end{figure}

We first report results that confirm the idea that ByRDiE can take advantage of cooperation among different nodes to achieve better performance even when there are Byzantine failures in the network. This involves varying the local sample size $N$ and comparing the classification accuracy on the test data. We generate a network of $M = 50$ nodes, randomly pick $b=10$ nodes within the network to be Byzantine nodes ($20\%$ failures), vary $N$ from $10$ to $30$, and average the final set of results over 10 independent (over network, Byzantine nodes, and data allocation) Monte Carlo trials of ByRDiE. The performance of ByRDiE is compared with two approaches: ($i$) coordinate descent-based training using only local data (local CD); and ($ii$) distributed gradient descent-based~\cite{Nedic2009Distributed} training involving network data (DGD). To achieve the best convergence rate for ByRDiE, $T$ is chosen to be 1 in these experiments. The final set of results are shown in Fig~\ref{fig: varying_N}, in which the average classification accuracy is plotted against the number of algorithmic iterations, corresponding to the number of (scalar-valued) communication iterations for ByRDiE, the number of per-dimension updates for local CD, and the number of (vector-valued) communication iterations for DGD. It can be seen that the performance of local CD is not good enough due to the small local sample size. On the other hand, when trying to improve performance by cooperating among different nodes, DGD fails for lack of robustness against Byzantine failures. In contrast, the higher accuracy of ByRDiE shows that ByRDiE can take advantage of the larger distributed dataset while being Byzantine resilient.

Next, we investigate the impact of different values of $T$ in ByRDiE on the tradeoff between consensus and convergence rate. This involves generating a network of $M=50$ nodes that includes randomly placed $b=5$ Byzantine nodes within the network ($10\%$ failures), randomly allocating $N = 60$ training samples to each nonfaulty node, and averaging the final set of results over $10$ independent trials. The corresponding results are reported in Fig.~\ref{fig: varying_T} for four different values of $T$ as a function of the number of communication iterations $t_c$ in terms of ($i$) average classification accuracy (Fig.~\ref{fig: varying_T_accuracy}) and ($ii$) average pairwise distances between local classifiers (Fig.~\ref{fig: varying_T_consensus}). It can be seen from these figures that $T=1$ leads to the fastest convergence in the initial stages of the algorithm, but this fast convergence comes at the expense of the largest differences among local classifiers, especially in the beginning of the algorithm. In contrast, while $T = 4$ results in the slowest convergence, it ensures closeness of the local classifiers at all stages of the algorithm.

\begin{figure}[t]
\centering
\begin{subfigure}{.95\columnwidth}
  \centering
  \includegraphics[width=\linewidth]{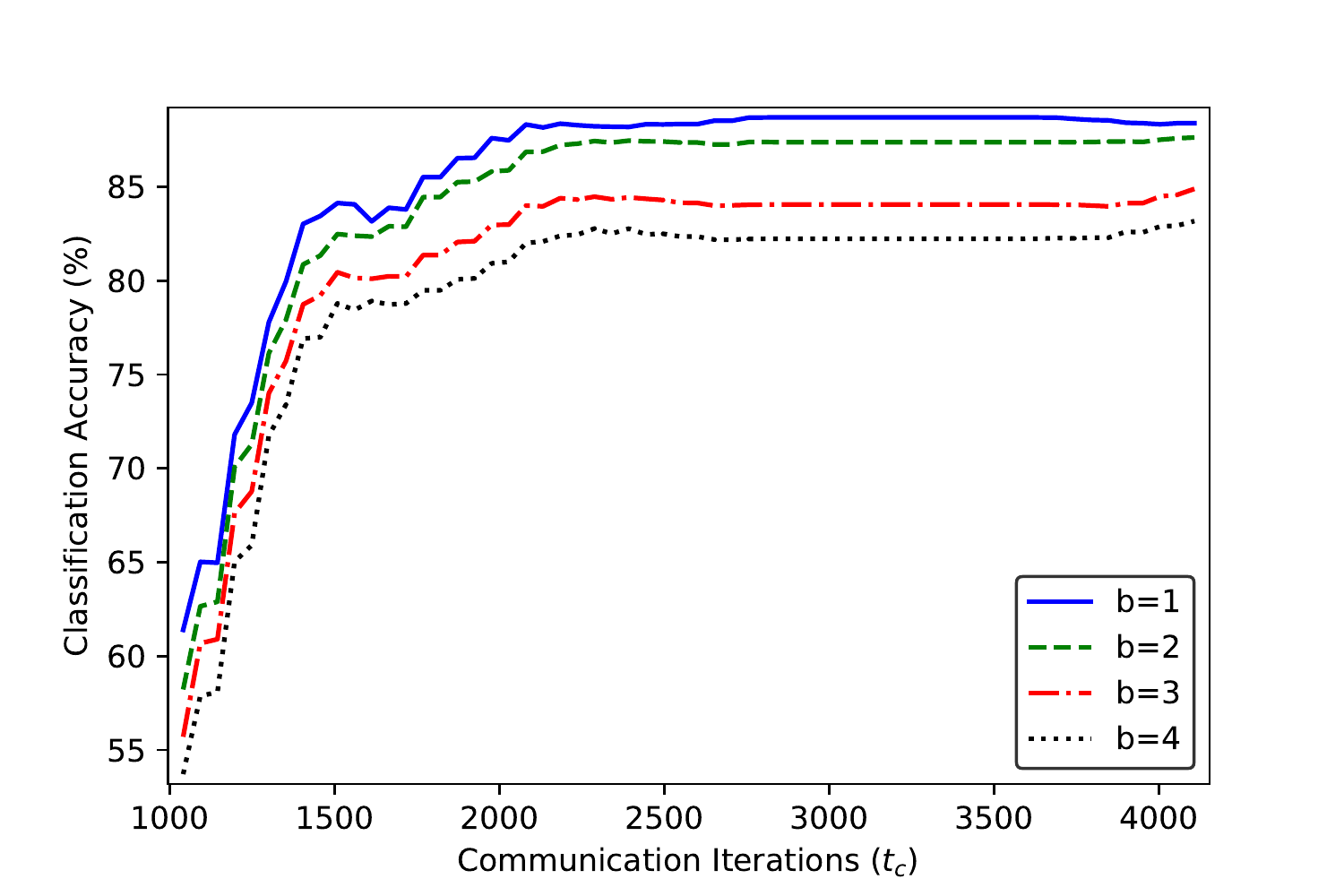}
  \caption{Classification accuracy for different values of $b$}
  \label{fig: varying_b_accuracy_big}
\end{subfigure}
\\
\vspace{\baselineskip}
\begin{subfigure}{.95\columnwidth}
  \centering
  \includegraphics[width=\linewidth]{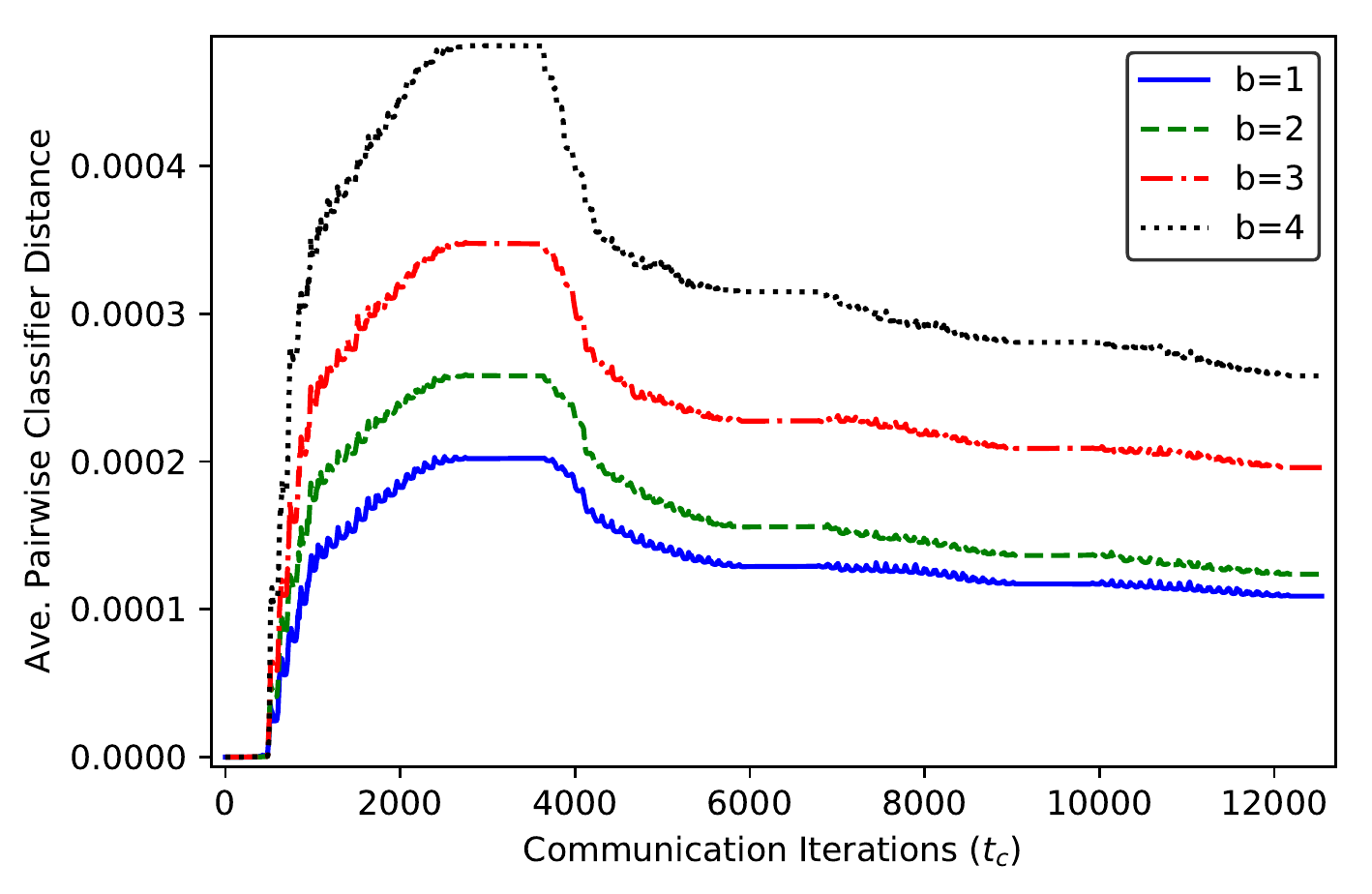}
  \caption{Consensus behavior for different values of $b$}
  \label{fig: varying_b_consensus}
\end{subfigure}
\caption{Convergence and consensus behavior of ByRDiE within a small network ($M=20$) as a function of the number of communication iterations $t_c$ for different number $b$ of Byzantine nodes in the network. All plots correspond to
distributed training of a linear SVM using the MNIST dataset.}
\label{fig: varying_b_big}
\end{figure}

Finally, we investigate the impact of different values of $b$ (the actual number of Byzantine nodes) on the performance of ByRDiE. In order to amplify this impact, we focus on a smaller network of $M = 20$ nodes with a small number of $N = 10$ samples per node and $T = 3$. Assuming resilience against the worst-case Byzantine scenario (cf.~Remark~\ref{rem:choice.b}), ByRDiE requires the neighborhood of each node to have at least $(2b+1)$ nodes. Under this constraint, we find that $b \geq 5$ in this setting, which translates into $25\%$ or more of the nodes as being Byzantine, often renders ByRDiE unusable.\footnote{As noted in Remark~\ref{rem:choice.b}, one could overcome this limit by opting instead for resilience against the average-case scenario and replacing $b$ with $b_{\text{ave}}$.} We therefore report our results in terms of both consensus and convergence behavior by varying $b$ from $1$ to $4$. The final results, averaged over $10$ independent trials, are shown in Fig.~\ref{fig: varying_b_big}. It can be seen from these figures that both the classification accuracy (Fig.~\ref{fig: varying_b_accuracy_big}) and the consensus performance (Fig.~\ref{fig: varying_b_consensus}) of ByRDiE suffer as $b$ increases from $1$ to $4$. Such behavior, however, is in line with our theoretical developments. First, as $b$ increases, the \emph{post-screening} graph becomes sparser, which slows down information diffusion and consensus. Second, as $b$ increases, fewer samples are incorporated into distributed learning, which limits the final classification accuracy of distributed SVM.

\subsection{Distributed Neural Network Using Iris Dataset}
While the theoretical guarantees for ByRDiE have been developed for convex learning problems, we now demonstrate the usefulness of ByRDiE in distributed nonconvex learning problems. Our setup in this regard involves distributed training of a small-scale neural network using the three-class Iris dataset. This dataset consists of 50 samples each from three species of irises, with each sample being a four-dimensional feature vector. While one of the species in this data is known to be linearly separable from the other two, the remaining two species cannot be linearly separated. We employ a fully connected three-layer neural network for classification of this dataset, with a four-neuron input layer, a three-neuron hidden layer utilizing the \emph{rectified linear unit} (ReLU) activation function and a three-neuron output layer using the softmax function for classification. The distributed setup corresponds to a random network of $M = 10$ nodes with one Byzantine node ($b=1$), in which each nonfaulty node is allocated $N = 15$ samples that are equally divided between the three classes.

\begin{table}[h]
\renewcommand{\arraystretch}{1.2}
% increase table row spacing, adjust to taste
\centering
% Some packages, such as MDW tools, offer better commands for making tables
% than the plain LaTeX2e tabular which is used here.
\begin{tabular}{|c|c|c|}
\hline
{\bf Algorithm}  & {\bf Iterations to achieve $95\%$ accuracy} & {\bf Consensus}  \\
\hline
DGD  & $\infty$ & No  \\
\hline
ByRDiE  & 19 & Yes  \\
\hline\hline
Centralized CD& 17 & N/A  \\
\hline
\end{tabular}
\caption{Distributed training of a neural network in the presence of Byzantine failures using ByRDiE and DGD.}
\label{table: Iris}
\end{table}

Since distributed training of this neural network requires solving a distributed nonconvex problem, our theoretical guarantees for ByRDiE do not hold in this setting. Still, we use ByRDiE with $T=1$ to train the neural network for a total of 200 independent trials with independent random initializations. Simultaneously, we use DGD for distributed training and also train the neural network in a centralized setting using CD for comparison purposes. The final results are reported in Table~\ref{table: Iris}, which lists the average number of \emph{outer} iterations needed by each training algorithm to achieve $95\%$ classification accuracy. It can be seen from these results that while DGD completely breaks down in the presence of a \emph{single} Byzantine node, ByRDiE continues to be resilient to Byzantine failures in distributed nonconvex learning problems and comes close to matching the performance of centralized CD in this case.

\section{Conclusion}\label{section conclusion}
In this paper, we have introduced a coordinate descent-based distributed algorithm, termed ByRDiE, that can carry out distributed learning tasks in the presence of Byzantine failures in the network. The proposed algorithm pursues the minimizer of the statistical risk in a fully distributed environment and theoretical results guarantee that ByRDiE achieves this objective under mild assumptions on the risk function and the network topology. In addition, numerical results presented in the paper validate resilience of ByRDiE against Byzantine failures in the network for distributed convex and nonconvex learning tasks. Future works in this direction include strengthening the results in terms of almost sure convergence, deriving explicit rates of convergence, and extensions of theoretical analysis to constant step size and nonconvex risk functions.

\appendices
%%%%%%%%%%%%%%%%%%%%%%%%%%%%%%%%%%%%%%%%%%%%%%%%%%%%%%%%%%%%%%%%%%%
\section{Proof of Theorem~\ref{lemma consensus}}\label{proof of lemma consensus}
\noindent Fix any dimension $k \in \{1,\dots,P\}$ and recall from \eqref{define vt} that
\begin{align}\label{define vt+1}
  &V(m+1)=\mathbf{1}\pi^T(1)\Omega(1)\nonumber\\ &\quad-\sum\limits_{\tau=1}^{m-1}\mathbf{1}\pi^T(\tau+1)\bar{\rho}(\tau)G(\tau)-\mathbf{1}\pi^T(m+1)\bar{\rho}(m)G(m).
\end{align}
Since $m := (r-1)T+t$ and $[\Omega(m)]_j :=[w_j^r(t)]_k$, we get
\begin{align}
 &v^k(m+1)\nonumber\\
    &\quad=\sum\limits_{i=1}^{\vert J'\vert}[\pi(1)]_i[w_i^1(1)]_k-\sum\limits_{\tau=1}^{m-1}\bar{\rho}(\tau)\sum\limits_{i=1}^{\vert J'\vert}[\pi(\tau+1)]_i[G(\tau)]_i\nonumber\\
    &\qquad\qquad-\bar{\rho}(m)\sum\limits_{i=1}^{\vert J'\vert}[\pi(r+1)]_i[G(m)]_i.
\end{align}
We also get from \eqref{expression of wt+1 from w0} that
\begin{align}
 &[w_j^r(t+1)]_k=\sum\limits_{i=1}^{\vert J'\vert}[\Phi(m,1)]_{ji}[w^1_i(1)]_k \nonumber\\ &\quad-\sum\limits_{\tau=1}^{m-1}\bar{\rho}(\tau)\sum\limits_{i=1}^{\vert J'\vert}[\Phi(m,\tau+1)]_{ji}[G(\tau)]_i-\bar{\rho}(m)[G(m)]_j,
\end{align}
where we have used the fact that $\Phi(m,m+1)=I$. Thus,
\begin{align}
&\Big|[w_j^r(t+1)]_k-v^k(m+1)\Big|=\nonumber\\
&\quad\Big|\sum\limits_{i=1}^{\vert J'\vert}([\Phi(m,1)]_{ji}-[\pi(1)]_i)[w_i^1(1)]_k\nonumber\\
&\qquad-\sum\limits_{\tau=1}^{m-1}\bar{\rho}(\tau)\sum\limits_{i=1}^{\vert J'\vert}([\Phi(m,\tau+1)]_{ji}-[\pi(\tau+1)]_i)[G(\tau)]_i\nonumber\\
&\qquad\quad-\bar{\rho}(m)\sum\limits_{i=1}^{\vert J'\vert}[\pi(m+1)]_i([G(m)]_j-[G(m)]_i)\Big|,
\end{align}
where the last summation follows from the observation that $\sum_{i=1}^{\vert J'\vert}[\pi(m+1)]_i=1$. Further, Assumption~\ref{assumption lipschitz} implies there exists a coordinate-wise bound $L_\nabla \leq L'$ such that $\forall i,m, |[G(m)]_i| \leq L_\nabla$. Using this and Lemma~\ref{lemma property 2 of phi}, we obtain
\begin{align}\label{consensus rate}
&\Big|[w_j^r(t+1)]_k-v^k(m+1)\Big|\nonumber\\
&\leq\sum\limits_{i=1}^{\vert J'\vert}\Big|[\Phi(m,1)]_{ji}-[\pi(1)]_i\Big|\,\big| [w_i^1(1)]_k\big|\nonumber\\
&\quad +\sum\limits_{\tau=1}^{m-1}\bar{\rho}(\tau)\sum\limits_{i=1}^{\vert J'\vert}\Big|[\Phi(m,\tau+1)]_{ji} - [\pi(\tau+1)]_i\Big|\big|[G(\tau)]_i\big|\nonumber\\
&\quad+\bar{\rho}(m)\sum\limits_{i=1}^{\vert J'\vert}[\pi(m+1)]_i\Big|[G(m)]_j-[G(m)]_i\Big|\nonumber\\
&\leq \vert J'\vert L_\nabla\sum\limits_{\tau=1}^{m-1}\bar{\rho}(\tau)\mu^{\frac{m-\tau}{\nu}}+2\bar{\rho}(m)L_\nabla.
\end{align}
Note that the last inequality in the above expression exploits the fact that $[w_i^1(1)]_k \equiv 0$. In the case of an arbitrary initialization of ByRDiE, however, we could have bounded the first term in the second inequality in \eqref{consensus rate} as $\vert J'\vert \Gamma \mu^{\frac{m+1}{\nu}}$, which still goes to zero as $m \rightarrow \infty$.

We now expand the term $\sum_{\tau=1}^{m-1}\bar{\rho}(\tau)\mu^{\frac{m-\tau}{\nu}}$ in \eqref{consensus rate} as\footnote{We focus here only on the case of an even $m$ for the sake of brevity; the expansion for an odd $m$ follows in a similar fashion.}
\begin{align*}
\sum_{\tau=1}^{m-1}\bar{\rho}(\tau)\mu^{\frac{m-\tau}{\nu}} &= \left(\sum\limits_{\tau=1}^{\frac{m}{2}-1}\bar{\rho}(\tau)\mu^{\frac{m-\tau}{\nu}}+\sum\limits_{\tau=\frac{m}{2}}^{m-1}\bar{\rho}(\tau)\mu^{\frac{m-\tau}{\nu}}\right)\nonumber\\
&\leq\bar{\rho}(1)\sum\limits_{\tau=1}^{\frac{m}{2}-1}\mu^{\frac{m-\tau}{\nu}}+\bar{\rho}(\tfrac{r}{2})\sum\limits_{\tau=\frac{m}{2}}^{m-1}\mu^{\frac{m-\tau}{\nu}}\nonumber\\
&=\frac{\bar{\rho}(1)\mu^{\frac{\frac{m}{2}+1}{\nu}}(1-\mu^{\frac{m}{2\nu}})}{\mu^{1-\frac{1}{\nu}}}+\frac{\bar{\rho}(\frac{m}{2})\mu^{\frac{1}{\nu}}(1-\mu^{\frac{m}{2\nu}})}{\mu^{1-\frac{1}{\nu}}}.
\end{align*}
It can be seen from the above expression that $\lim_{m \rightarrow \infty}\sum_{\tau=1}^{m-1}\bar{\rho}(\tau)\mu^{\frac{m-\tau}{\nu}} \leq 0$. Since $\sum_{\tau=1}^{m-1}\bar{\rho}(\tau)\mu^{\frac{m-\tau}{\nu}}\geq 0$, it follows that $\lim_{m \rightarrow \infty}\sum_{\tau=1}^{m-1}\bar{\rho}(\tau)\mu^{\frac{m-\tau}{\nu}} = 0$. This fact in concert with \eqref{consensus rate} and the diminishing nature of $\bar{\rho}(m)$ give
\begin{align}\label{eqn:proof.consensus rate.2}
  \lim_{m\rightarrow \infty}\left\vert [w_j^r(t+1)]_k-v^k(m+1)\right\vert = 0.
\end{align}
Next, take $r = \bar{r}$, $t = T$, and note that $w_j^r(t+1) = w_j^{\bar{r},T}$ and $\bar{m} = \bar{r}T +1 = [(r-1)T + T]+1 = m + 1$ in this case. Further, $\bar{m}\to\infty$ when $\bar{r}\to \infty$ and/or $T\to\infty$. It therefore follows from \eqref{eqn:proof.consensus rate.2} that $[w_j^{\bar{r},T}]_k \to v^k(\bar{m})$ as $\bar{r}\to\infty$ and/or $T\to\infty$. Finally, since $k$ in our analysis was arbitrary, the same holds for all $k=1,2,\dots,P$.\qed

%%%%%%%%%%%%%%%%%%%%%%%%%%%%%%%%%%%%%%%%%%%%%%%%%%%%%%%%%%%%%%%%%%%
\section{Proof of Theorem~\ref{lemma vector case high probability}}\label{proof of lemma vector case high probability}
\noindent We begin by fixing $w^\prime \in \mathbb{R}$ such that $|w^\prime| \leq \Gamma$ and defining $\alpha^r_k \in \mathbb{R}^{|J'|}$ as $\alpha^r_k := [\alpha_j(r,k): j \in J']$ and $\tilde{F}^r_k(w') \in \mathbb{R}^{|J'|}$ as $\tilde{F}^r_k(w') := [\widehat{f}(\tilde{w}_k^r\vert_{[\tilde{w}_k^r]_k=w'},S_j): j \in J']$. We can then write
\begin{align}
  H^r_k(w^\prime) = {\alpha^r_k}^T\tilde{F}^r_k(w').
\end{align}
Next, notice from~\eqref{eqn:empirical.risk}, \eqref{eqn:def.hkr}, and \eqref{eqn:def.Hkr} that
\begin{align}
  \forall (r,k), \ \mathbb{E}[H^r_k(w^\prime)] = h^r_k(w^\prime).
\end{align}
We now fix indices $(r,k)$ and note that the random variables $\alpha_j(r,k) \ell(\tilde{w}_k^r\vert_{[\tilde{w}_k^r]_k=w'},(x_{jn},y_{jn}))$ involved in the definition of the univariate function $H^r_k(w^\prime)$ are ($i$) independent due to the independence of the training samples, and ($ii$) bounded as $0\leq\alpha_j\ell(w,(x_{jn},y_{jn}))\leq \alpha_jC$ due to Assumption~\ref{assumption finite value}. Therefore, the following holds $\forall \epsilon' > 0$ due to Hoeffding's inequality~\cite{hoeffding1963probability}:
\begin{align}
&\mathbb{P}\left(\left\vert {\alpha^r_k}^T\tilde{F}^r_k(w')-h^r_k(w^\prime)\right\vert \geq \epsilon^\prime\right) \equiv\nonumber\\
&\ \ \! \mathbb{P}\left(\left\vert H^r_k(w^\prime)-h^r_k(w^\prime)\right\vert \geq \epsilon^\prime\right) \leq 2\exp\left(-\frac{2N{\epsilon^\prime}^2}{C^2 \|\alpha^r_k\|^2}\right)\label{eqn:lemma.proof.single.bound.1}.
\end{align}
Further, since the $|J'|$-dimensional vector $\alpha^r_k$ is an arbitrary element of the standard simplex, defined as
\begin{align}
  \Delta := \{v \in \mathbb{R}^{|J'|}: \sum_{j=1}^{|J'|} [v]_j = 1 \text{ and } \forall j, [v]_j \geq 0\},
\end{align}
the probability bound in \eqref{eqn:lemma.proof.single.bound.1} also holds for any $v \in \Delta$, i.e.,
\begin{align}\label{eqn:lemma.proof.single.bound.2}
\mathbb{P}\left(\left\vert {v}^T\tilde{F}^r_k(w')-h^r_k(w^\prime)\right\vert \geq \epsilon^\prime\right) \leq 2\exp\left(-\frac{2N{\epsilon^\prime}^2}{C^2 \|v\|^2}\right).
\end{align}

We now define the set $\mathcal{S}_\alpha := \{\alpha^r_k\}_{r,k=1}^{\bar{r},P}$. Our next goal is to leverage \eqref{eqn:lemma.proof.single.bound.2} and derive a probability bound similar to \eqref{eqn:lemma.proof.single.bound.1} that \emph{uniformly} holds for \emph{all} $v \in \mathcal{S}_\alpha$. To this end, let
\begin{align}
  \mathbb{C}_\xi := \{c_1,\dots,c_{d_\xi}\} \subset \Delta \quad \text{s.t.} \quad \mathcal{S}_\alpha \subseteq \bigcup_{q=1}^{d_\xi} B(c_q, \xi)
\end{align}
denote an $\xi$-covering of $\mathcal{S}_\alpha$ in terms of the $\ell_2$ norm and define $\bar{c} := \argmax_{c \in \mathbb{C}_\xi} \|c\|$. It then follows from~\eqref{eqn:lemma.proof.single.bound.2} and the union bound that
\begin{align}\label{eqn:lemma.proof.union.bound.1}
&\mathbb{P}\left(\sup_{c \in \mathbb{C}_\xi} \left\vert c^T \tilde{F}^r_k(w') -h^r_k(w^\prime)\right\vert \geq \epsilon^\prime\right)\nonumber\\
&\qquad\qquad\qquad\qquad\qquad\leq 2d_\xi \exp\left(-\frac{2N{\epsilon^\prime}^2}{C^2 \|\bar{c}\|^2}\right).
\end{align}
In addition, we have
\begin{align}\label{eqn:lemma.proof.sup.bounds.1}
  &\sup_{v \in \mathcal{S}_\alpha} \left|v^T \tilde{F}^r_k(w') - h^r_k(w^\prime)\right| \stackrel{(a)}{\leq} \sup_{c \in \mathbb{C}_\xi} \left|c^T \tilde{F}^r_k(w') - h^r_k(w^\prime)\right| +\nonumber\\
  &\qquad\qquad\qquad\qquad\quad\sup_{v \in \mathcal{S}_\alpha,c \in \mathbb{C}_\xi}\|v - c\| \|\tilde{F}^r_k(w')\|,
\end{align}
where ($a$) is due to triangle and Cauchy--Schwarz inequalities. Trivially, $\sup_{v \in \mathcal{S}_\alpha,c \in \mathbb{C}_\xi}\|v - c\| \leq \xi$ from the definition of $\mathbb{C}_\xi$, while $\|\tilde{F}^r_k(w')\| \leq \sqrt{|J'|}C$ from the definition of $\tilde{F}^r_k(w')$ and Assumption~\ref{assumption finite value}. Combining \eqref{eqn:lemma.proof.union.bound.1} and \eqref{eqn:lemma.proof.sup.bounds.1}, we get
\begin{align}\label{eqn:lemma.proof.union.bound.2}
&\mathbb{P}\left(\sup_{v \in \mathcal{S}_\alpha} \left\vert v^T \tilde{F}^r_k(w') -h^r_k(w^\prime)\right\vert \geq \epsilon^\prime + \sqrt{|J'|}\xi C\right)\nonumber\\
&\qquad\qquad\qquad\qquad\qquad\quad\leq 2d_\xi \exp\left(-\frac{2N{\epsilon^\prime}^2}{C^2 \|\bar{c}\|^2}\right).
\end{align}

We now define $\bar{\alpha} := \argmax_{v \in \mathcal{S}_\alpha} \|v\|$. It can then be shown from the definitions of $\mathbb{C}_\xi$ and $\bar{c}$ that
\begin{align}\label{eqn:max.alpha.norm}
  \|\bar{c}\|^2 \leq 2(\|\bar{\alpha}\|^2 + \xi^2).
\end{align}
Therefore, picking any $\epsilon^{\prime\prime} > 0$, and defining $\epsilon^\prime := \epsilon^{\prime\prime}/2$ and $\xi := \epsilon^{\prime\prime}/(2C\sqrt{|J'|})$, we have from \eqref{eqn:lemma.proof.union.bound.2} and \eqref{eqn:max.alpha.norm} that
\begin{align}\label{eqn:lemma.proof.union.bound.3}
&\mathbb{P}\left(\sup_{v \in \mathcal{S}_\alpha} \left\vert v^T \tilde{F}^r_k(w') -h^r_k(w^\prime)\right\vert \geq \epsilon^{\prime\prime}\right)\nonumber\\
&\qquad\qquad\quad\leq 2d_\xi \exp\left(-\frac{4|J'|N{\epsilon^{\prime\prime}}^2}{4C^2|J'|\|\bar{\alpha}\|^2 + {\epsilon^{\prime\prime}}^2}\right).
\end{align}

In order to obtain the desired uniform bound, we next need to remove the dependence on $\tilde{w}_k^r$ and $w'$ in \eqref{eqn:lemma.proof.union.bound.3}. To this end, we write $\tilde{F}^r_k(w')$ and $h^r_k(w^\prime)$ as $\tilde{F}^r_k(\tilde{w}_k^r, w')$ and $h^r_k(\tilde{w}_k^r,w')$, respectively, to highlight their dependence on $\tilde{w}_k^r$ and $w'$. Next, we define
\begin{align}
 \mathbb{U}_\zeta := \{u_1,\dots,u_{m_\zeta}\} \subset W \quad \text{s.t.} \quad W \subseteq \bigcup_{q=1}^{m_\zeta} B(u_q, \zeta)
\end{align}
to be a $\zeta$-covering of $W$ in terms of the $\ell_2$ norm. It then follows from \eqref{eqn:lemma.proof.union.bound.3} that
\begin{align}\label{eqn:lemma.proof.union.bound.4}
&\mathbb{P}\left(\sup_{v \in \mathcal{S}_\alpha, u \in \mathbb{U}_\zeta} \left\vert v^T \tilde{F}^r_k(u,[u]_k) -h^r_k(u,[u]_k)\right\vert \geq \epsilon^{\prime\prime}\right)\nonumber\\
&\qquad\qquad\quad\leq 2d_\xi m_\zeta \exp\left(-\frac{4|J'|N{\epsilon^{\prime\prime}}^2}{4C^2|J'|\|\bar{\alpha}\|^2 + {\epsilon^{\prime\prime}}^2}\right).
\end{align}
Similar to \eqref{eqn:lemma.proof.sup.bounds.1}, and using notation $\mathcal{L} := \{w' \in \mathbb{R}:|w'| \leq \Gamma\}$, we can also write
\begin{align}\label{eqn:lemma.proof.sup.bounds.2}
  &\sup_{v \in \mathcal{S}_\alpha, w \in W, w' \in \mathcal{L}} \left\vert v^T \tilde{F}^r_k(w,w') -h^r_k(w, w^\prime)\right\vert\nonumber\\
  &\quad\leq \sup_{u \in \mathbb{U}_\zeta, v \in \mathcal{S}_\alpha} \left|v^T \tilde{F}^r_k(u,[u]_k) -h^r_k(u,[u]_k)\right| +\nonumber\\
  &\qquad \sup_{u \in \mathbb{U}_\zeta, v \in \mathcal{S}_\alpha, w \in W, w' \in \mathcal{L}}\bigg[\left|v^T \tilde{F}^r_k(w,w')-v^T \tilde{F}^r_k(u,[u]_k)\right| + \nonumber\\
  &\qquad\qquad\qquad\qquad\qquad\left|h^r_k(u, [u]_k)-h^r_k(w, w^\prime)\right|\bigg].
\end{align}
Further, since $w\vert_{[w]_k=w'} \in W$ for any $(w,w') \in W \times \mathcal{L}$, we have from Assumption~\ref{assumption lipschitz} and definition of the set $\mathbb{U}_\zeta$ that
\begin{align}
  \sup_{u,v,w,w'} \left|v^T \tilde{F}^r_k(w,w')-v^T \tilde{F}^r_k(u,[u]_k)\right| &\leq L'\zeta, \quad \text{and}\label{eqn:lemma.proof.sup.bounds.3}\\
  \sup_{u,v,w,w'} \left|h^r_k(u, [u]_k)-h^r_k(w, w^\prime)\right| &\leq L'\zeta\label{eqn:lemma.proof.sup.bounds.4}.
\end{align}
We now fix $\epsilon''' > 0$, and define $\epsilon^{\prime\prime} := \epsilon'''/2$ and $\zeta := \epsilon'''/4L'$. We then obtain the following from \eqref{eqn:lemma.proof.union.bound.3}--\eqref{eqn:lemma.proof.sup.bounds.4}:
\begin{align}\label{eqn:lemma.proof.union.bound.5}
&\mathbb{P}\left(\sup_{v \in \mathcal{S}_\alpha, \{\tilde{w}_k^r\}, w'} \left\vert H^r_k(w^\prime) -h^r_k(w^\prime)\right\vert \geq \epsilon''' \right)\nonumber\\
&\qquad\qquad\quad\leq 2d_\xi m_\zeta \exp\left(-\frac{4|J'|N {\epsilon'''}^2}{16C^2|J'|\|\bar{\alpha}\|^2 + {\epsilon'''}^2}\right).
\end{align}

To conclude, let us define the event
\begin{align}\label{eqn:lemma.proof.event.A}
  \mathcal{A}_\epsilon := \left\{\sup_{v \in \mathcal{S}_\alpha, \{\tilde{w}_k^r\}, w'} \left\vert H^r_k(w^\prime) -h^r_k(w^\prime)\right\vert < \frac{\epsilon}{2}\right\}
\end{align}
for any $\epsilon > 0$. Conditioned on this event, we have
\begin{align}\label{eqn:lemma.proof.uniform.bound}
    &\forall (r,k), \ h^r_k(\widehat{w}) - h^r_k(w^\star) = \underbrace{h^r_k(\widehat{w}) - H^r_k(\widehat{w})}_{< \epsilon/2}\nonumber\\
     &\qquad + \underbrace{H^r_k(w^\star) - h^r_k(w^\star)}_{< \epsilon/2} + \underbrace{H^r_k(\widehat{w}) - H^r_k(w^\star)}_{\leq 0} < \epsilon.
\end{align}
Therefore, given any $\epsilon > 0$, we have from \eqref{eqn:lemma.proof.union.bound.5}--\eqref{eqn:lemma.proof.uniform.bound} that
\begin{align}\label{eqn:lemma.proof.final.bound.1}
&\mathbb{P}\left(\sup_{r,k} \left[h^r_k(\widehat{w}) - h^r_k(w^\star)\right] \geq \epsilon \right)\nonumber\\
&\qquad\qquad\quad\leq 2d_\xi m_\zeta \exp\left(-\frac{4|J'|N \epsilon^2}{64C^2|J'|\|\bar{\alpha}\|^2 + \epsilon^2}\right),
\end{align}
where $\xi = \frac{\epsilon}{8C|J'|}$ and $\zeta = \frac{\epsilon}{8L'}$. The proof now follows from \eqref{eqn:lemma.proof.final.bound.1} and the following facts about the covering numbers of the sets $\mathcal{S}_\alpha$ and $W$: (1) Since $\mathcal{S}_\alpha$ is a subset of $\Delta$, which can be circumscribed by a sphere in $\mathbb{R}^{|J'|-1}$ of radius $\sqrt{\frac{|J'|-1}{|J'|}} < 1$, we can upper bound $d_\xi$ by $\left(\frac{24C|J'|}{\epsilon}\right)^{|J'|}$~\cite{Verger-Gaugry2005covering}; and (2) Since $W \subset \mathbb{R}^P$ can be circumscribed by a sphere in $\mathbb{R}^P$ of radius $\Gamma \sqrt{P}$, we can upper bound $m_\zeta$ by $\left(\frac{24 L' \Gamma \sqrt{P}}{\epsilon}\right)^{P}$.\qed

%%%%%%%%%%%%%%%%%%%%%%%%%%%%%%%%%%%%%%%%%%%%%%%%%%%%%%%%%%%%%%%%%%%
\section{Proof of Theorem~\ref{coordinate descent theorem}}\label{proof.byrdie.convergence.caseI}
\noindent Let us begin with any coordinate descent iteration $r$ and dimension $k$, which starts with some $\tilde{w}_k^r \in \mathbb{R}^P$ (recall that $\tilde{w}_1^1 \equiv 0$). In order to facilitate the proof, we explicitly write $w^\star \in \mathbb{R}$, defined in \eqref{define wastastk}, and $\widehat{w} \in \mathbb{R}$, defined in \eqref{define whatk}, as ${w^\star}^r_k$ and $\widehat{w}^r_k$, respectively, to bring out their dependence on the indices $(r,k)$. We now define $\widehat{w}^r \in \mathbb{R}^P$ as $[\widehat{w}^r]_k := \widehat{w}^r_k$ and recall from Lemma~\ref{scalar convergence} and the subsequent definitions that $\forall r \in \mathbb{N}, \forall j \in J', \lim_{T \to \infty} w^{r,T}_j = \widehat{w}^r$. It therefore suffices to show that, in the limit of large $\bar{r}$, the statistical risk of $\widehat{w}^{\bar{r}}$ approaches the minimum statistical risk.

We now fix an arbitrary $\epsilon' \in (0,1)$ and note that
\begin{align}\label{eqn:thm.byrdie.prob.event}
  \sup_{r,k} \left[h^r_k(\widehat{w}^r_k) - h^r_k({w^\star}^r_k)\right] < \epsilon'
\end{align}
with high probability due to Theorem~\ref{lemma vector case high probability}. Note that invoking Theorem~\ref{lemma vector case high probability} requires the conditions $|\widehat{w}^r_k| \leq \Gamma$, $|{w^\star}^r_k| \leq \Gamma$, and $\{\tilde{w}^r_k\}_{r,k} \subset W$. We will return to these conditions in the latter part of the proof. Going forward, we condition on the event described by \eqref{eqn:thm.byrdie.prob.event} and notice that $\forall w' \in \mathbb{R}$, we have
\begin{align}\label{eqn:ByRDiE.I.th.proof.1}
&h_k^r({w^\star}^r_k) \stackrel{(a)}{\leq} h_k^r(w') \stackrel{(b)}{\leq} h_k^r(\widehat{w}_k^{r-1})]+\frac{L}{2}\vert w'-\widehat{w}_k^{r-1}\vert^2\nonumber\\
    &\qquad\qquad\qquad\qquad\quad \ +[\nabla h_k^r(\widehat{w}_k^{r-1})]_k(w'-\widehat{w}_k^{r-1}),
\end{align}
where ($a$) follows from the definition of ${w^\star}^r_k$ and ($b$) follows from Assumption~\ref{assumption lipschitz}. Plugging $w'=\widehat{w}_k^{r-1}-\frac{1}{L}[\nabla h_k^r(\widehat{w}_k^{r-1})]_k$ in \eqref{eqn:ByRDiE.I.th.proof.1}, we obtain
\begin{align}
    &h_k^r(\widehat{w}_k^{r-1}) - h_k^r({w^\star}^r_k) \geq \frac{1}{2L}[\nabla h_k^r(\widehat{w}_k^{r-1})]_k^2\label{eqn:ByRDiE.I.th.proof.2}\\
\Leftrightarrow \quad &h_k^r(\widehat{w}_k^{r-1}) - h_k^r(\widehat{w}_k^{r}) > \frac{1}{2L}[\nabla h_k^r(\widehat{w}_k^{r-1})]_k^2 - \epsilon' \label{eqn:ByRDiE.I.th.proof.3}
\end{align}
where \eqref{eqn:ByRDiE.I.th.proof.2} follows from \eqref{eqn:thm.byrdie.prob.event}. We have from \eqref{eqn:ByRDiE.I.th.proof.3} that $h_k^r(\widehat{w}_k^{r})$ is a strict monotonically decreasing function of $r$ for all $k$ as long as $[\nabla h_k^r(\widehat{w}_k^{r-1})]_k^2 \geq 2L \epsilon'$. It therefore follows that there exists some $r_0 \in \mathbb{N}$ such that
\begin{align}
  \forall k, \forall r \geq r_0, \ &[\nabla h_k^r(\widehat{w}_k^r)]_k^2 < 4L\epsilon'\label{eqn:ByRDiE.I.th.proof.4}\\
   \Rightarrow \forall r \geq r_0, \ &\|\nabla\mathbb{E}[f(\widehat{w}^r,(x,y)]\| < 2 \sqrt{P} L \epsilon'\label{eqn:ByRDiE.I.th.proof.5}.
\end{align}
In addition, convexity of $\mathbb{E}[f(\cdot,(x,y))]$ dictates
\begin{align}\label{eqn:ByRDiE.I.th.proof.6}
    &\mathbb{E}[f(w^\ast,(x,y))] \geq \mathbb{E}[f(\widehat{w}^r,(x,y))]\nonumber\\
        &\qquad\qquad\qquad\qquad\quad+\nabla\mathbb{E}[f(\widehat{w}^r,(x,y)]^T(w^\ast-\widehat{w}^r).
\end{align}
Using \eqref{eqn:ByRDiE.I.th.proof.5}, \eqref{eqn:ByRDiE.I.th.proof.6}, and the Cauchy--Schwarz inequality yields
\begin{align}\label{eqn:ByRDiE.I.th.proof.7}
  &\forall r \geq r_0, \ \mathbb{E}[f(\widehat{w}^r,(x,y))] - \mathbb{E}[f(w^\ast,(x,y))]\nonumber\\
  &\qquad\leq \|\nabla\mathbb{E}[f(\widehat{w}^r,(x,y)]\| \|\widehat{w}^r - w^\ast\| < 2 P L \Gamma \epsilon'.
\end{align}
Setting $\epsilon' = \epsilon/c_4$ in \eqref{eqn:ByRDiE.I.th.proof.6} and removing conditioning on \eqref{eqn:thm.byrdie.prob.event} using Theorem~\ref{lemma vector case high probability} give us the desired bound.

We conclude by commenting on the validity of $|\widehat{w}^r_k| \leq \Gamma$, $|{w^\star}^r_k| \leq \Gamma$, and $\{\tilde{w}^r_k\}_{r,k} \subset W$ needed for Theorem~\ref{lemma vector case high probability}. The condition holds for $\tilde{w}^1_1$ and ${w^\star}^1_1$ from the definition of the set $W$. The proof of Theorem~\ref{lemma vector case high probability} and a union bound argument also tells us that we can augment the bound in Theorem~\ref{lemma vector case high probability} with $\left[h^1_1(\widehat{w}^1_1) - h^1_1({w^\star}^1_1)\right] < \epsilon'$ without either requiring $|\widehat{w}^1_1| \leq \Gamma$ or exploding the probability of failure. This however leads to the condition $h_1^1(\widehat{w}_1^{1}) < h_1^1(0)$ due to \eqref{eqn:ByRDiE.I.th.proof.3}, which implies $|\widehat{w}_1^{1}| \leq \Gamma$, $\tilde{w}^1_2 \in W$, and $|{w^\star}^1_2| \leq \Gamma$. We can therefore revert to the original probability bound of Theorem~\ref{lemma vector case high probability} and start over the same argument with the knowledge that $\tilde{w}^1_2 \in W$, $|\widehat{w}_1^{1}| \leq \Gamma$, and $|{w^\star}^1_2| \leq \Gamma$. The rest of the claim follows by induction.\qed

%%%%%%%%%%%%%%%%%%%%%%%%%%%%%%%%%%%%%%%%%%%%%%%%%%%%%%%%%%%%%%%%%%%
\section{Proof of Theorem~\ref{thm:algorithmic.convergence.rate}}\label{proof.thm:algorithmic.convergence.rate}
\noindent We begin by defining the notation $\bar{f}(w) := \mathbb{E}[f(w,(x,y))]$ for $w \in \mathbb{R}^P$. Next, we borrow the notation of $\widehat{w}^r \in \mathbb{R}^P$ and some facts from the proof of Theorem~\ref{coordinate descent theorem} in Appendix~\ref{proof.byrdie.convergence.caseI}. These facts include the following. First, $\widehat{w}^r$ is the output of ByRDiE after each iteration $r$ at all nonfaulty nodes, i.e., $\forall r \in \mathbb{N}, \forall j \in J', \lim_{T \to \infty} w^{r,T}_j = \widehat{w}^r$. Second, defining $\epsilon' := \epsilon/c_4$, we have
\begin{align}\label{eqn:proof.thm.algo.rate.1}
  \forall (r,k), \ h_k^r(\widehat{w}_k^{r}) - h_k^r(\widehat{w}_k^{r-1}) < - \epsilon'
\end{align}
with probability $\geq 1 - \delta(\epsilon, N, \bar{a})$ as long as $\bar{f}(\widehat{w}^r) - \bar{f}^* > \epsilon$ (see, e.g., \eqref{eqn:ByRDiE.I.th.proof.3} and the discussion around it). Conditioning on the probability event described by~\eqref{eqn:proof.thm.algo.rate.1}, the definition of $h_k^r(\cdot)$ and \eqref{eqn:proof.thm.algo.rate.1} then give us the following recursion in $r$:
\begin{align}\label{eqn:proof.thm.algo.rate.2}
  \forall r, \ \bar{f}(\widehat{w}^r) - \bar{f}(\widehat{w}^{r-1}) < - \epsilon'
\end{align}
as long as $\bar{f}(\widehat{w}^r) - \bar{f}^* > \epsilon$. Note here that $\widehat{w}^0 = \tilde{w}^1_1 \equiv 0$. Therefore, \eqref{eqn:proof.thm.algo.rate.2} and the telescoping sum argument give us
\begin{align}\label{eqn:proof.thm.algo.rate.3}
  \bar{f}(\widehat{w}^r) < \bar{f}^0 - r\epsilon' \ \Leftrightarrow \ \bar{f}(\widehat{w}^r) - \bar{f}^* < \bar{f}^0 - \bar{f}^* - r\epsilon'&\\
  \Leftrightarrow \bar{f}(\widehat{w}^r) - \bar{f}^* < \left(\bar{f}^0 - \bar{f}^*\right)\left(1 - \frac{r \epsilon'}{\bar{f}^0 - \bar{f}^*}\right)&.
\end{align}
Plugging the inequality $\bar{f}^0 - \bar{f}^* \leq L'\|w^*\| \leq L'\sqrt{P}\gamma^*$ in \eqref{eqn:proof.thm.algo.rate.3} completes the proof.\qed

%%%%%%%%%%%%%%%%%%%%%%%%%%%%%%%%%%%%%%%%%%%%%%%%%%%%%%%%%%%%%%%%%%%
\section{Proof of Lemma~\ref{lemma Q converge}}\label{proof of lemma Q convergence}
\noindent Similar to the proof of Theorem~\ref{thm:algorithmic.convergence.rate}, we once again use the notation $\bar{f}(Q(q)) := \mathbb{E}[f(Q(q),(x,y))]$ to denote the statistical risk incurred by $Q(q)$ and show $\bar{f}(Q(q)) \xrightarrow{q,N} \bar{f}(w^*)$ in probability. It then follows from \cite[Theorem~4.4]{PlanidenWang.SJO16} and our assumptions that $w^*$ is a \emph{strong minimizer} of $\bar{f}(\cdot)$ and, therefore, $Q(q) \xrightarrow{q,N} w^*$ in probability.

In order to prove the aforementioned claim, we fix any $\epsilon > 0$ and show that $\bar{f}(Q(q)) - \bar{f}(w^*) \leq \epsilon$ for all large enough $q$ with probability that approaches $1$ as $N \rightarrow \infty$. To this end, we claim that $\bar{f}(Q(q))$ for all $q$ greater than some $q_0 \in \mathbb{N}$ is a strictly monotonically decreasing function with probability $1$, which is also lower bounded by $\bar{f}(w^*)$. By the monotone convergence theorem, therefore, $\bar{f}(Q(q))$ converges. We claim this convergence only takes place when $|[\nabla \bar{f}(Q(q))]_k| \leq \epsilon_\nabla$. Relegating the validity of this claim to the latter part of this proof, this means that $|[\nabla \bar{f}(Q(q))]_k|$ eventually becomes smaller than $\epsilon_\nabla$ with probability $1$ for large enough $q$, which implies
\begin{align}\label{eqn:lemma Q converge.1}
  \bar{f}(Q(q)) - \bar{f}(w^\ast) &\leq  - \nabla\bar{f}(Q(q))^T(w^\ast - Q(q))\nonumber\\
    &\leq \|\nabla\bar{f}(Q(q))\|_2 (\|w^*\|_2 + \|Q(q)\|_2)\nonumber\\
    &\leq (\sqrt{P} \epsilon_\nabla)\cdot(2\sqrt{P}\Gamma) < \epsilon
\end{align}
because of convexity of $\bar{f}(\cdot)$, the Cauchy--Schwarz inequality, and our assumptions. Since this is the desired result, we need now focus on the claim of strict monotonicity of $\bar{f}(Q(q))$ for this lemma. To prove this claim, note from Assumption~\ref{assumption lipschitz} that
\begin{align}\label{eqn:lemma Q converge.2}
    &\bar{f}(Q(q+1))\leq \bar{f}(Q(q)) + \nabla \bar{f}(Q(q))^T(Q(q+1)-Q(q))\nonumber\\
        &\qquad\qquad\qquad\qquad\quad+\frac{L}{2}\Vert Q(q+1)-Q(q)\Vert^2_2\nonumber\\
    &\qquad\qquad\stackrel{(a)}{=} \bar{f}(Q(q))+[\nabla \bar{f}(Q(q))]_k[(Q(q+1)-Q(q))]_k\nonumber\\
        &\qquad\qquad\qquad\qquad\quad+\frac{L}{2}\big|[Q(q+1)-Q(q)]_k\big|^2,
\end{align}
where (a) follows from the fact that $Q(q+1)-Q(q)$ is only nonzero in dimension $k$.

Next, we rewrite \eqref{update Q} as follows:
\begin{align}\label{eqn:update of Q}
    &Q(q+1)=Q(q)-\eta(q)\sum\limits_{i=1}^{\vert J'\vert}[\pi(r+1)]_i\Big([\nabla \widehat{f}(Q(q),S_i)]_k -\nonumber\\
    &\qquad\qquad\qquad\quad[\nabla \widehat{f}(Q(q),S_i)]_k + [\nabla \widehat{f}(w_i^r,S_i)]_k\Big)e_k\nonumber\\
    &\quad= Q(q) - \eta(q)\sum\limits_{i=1}^{\vert J'\vert}[\pi(r+1)]_i [\nabla \widehat{f}(Q(q),S_i)]_ke_k + E(q),
\end{align}
where $E(q) := \eta(q)\sum_{i=1}^{\vert J'\vert}[\pi(r+1)]_i\big([\nabla \widehat{f}(Q(q),S_i)]_k-[\nabla \widehat{f}(w_i^r,S_i)]_k\big)e_k$. Plugging this into \eqref{eqn:lemma Q converge.2} results in
\begin{align}\label{eqn:lemma Q converge.3}
    &\bar{f}(Q(q))-\bar{f}(Q(q+1)) \geq -[E(q)]_k[\nabla\bar{f}(Q(q))]_k\nonumber\\
    &+\eta(q)\sum\limits_{i=1}^{\vert J'\vert}[\pi(r+1)]_i [\nabla \widehat{f}(Q(q),S_i)]_k[\nabla\bar{f}(Q(q))]_k\nonumber\\
    &-\frac{L}{2}\Big\vert [E(q)]_k- \eta(q)\sum\limits_{i=1}^{\vert J'\vert}[\pi(r+1)]_i [\nabla \widehat{f}(Q(q),S_i)]_k\Big\vert^2.
\end{align}
The right-hand side of \eqref{eqn:lemma Q converge.3} strictly lower bounded by $0$ implies strict monotonicity of $\bar{f}(Q(q))$. Simple algebraic manipulations show that this is equivalent to the condition
\begin{align}\label{eqn:lemma Q converge.condition}
    &\frac{L\eta(q)^2}{2}\Big(\sum\limits_{i=1}^{\vert J'\vert}[\pi(r+1)]_i [\nabla \widehat{f}(Q(q),S_i)]_k\Big)^2\nonumber\\
    &\qquad< \eta(q)\sum\limits_{i=1}^{\vert J'\vert}[\pi(r+1)]_i [\nabla\widehat{f}(Q(q),S_i)]_k[\nabla\bar{f}(Q(q))]_k\nonumber\\
    &\qquad\qquad+ L\eta(q)[E(q)]_k \sum\limits_{i=1}^{\vert J'\vert}[\pi(r+1)]_i [\nabla \widehat{f}(Q(q),S_i)]_k\nonumber\\
    &\qquad\qquad\qquad-[E(q)]_k[\nabla\bar{f}(Q(q))]_k-\frac{L}{2}[E(q)]_k^2.
\end{align}

Next, notice $\mathbb{E}[\sum_{i=1}^{\vert J'\vert}[\pi(r+1)]_i[\nabla \widehat{f}(Q(q),S_i)]_k]=\mathbb{E}[\nabla f(Q(q),(x,y))]_k$. We now make an assumption whose validity is also discussed at the end of the proof. We assume $\exists q_0' \in \mathbb{N}:\forall q \geq q_0', Q(q)\in W$, in which case we can show using arguments similar to the ones in the proof of Theorem~\ref{lemma vector case high probability} (cf.~Appendix~\ref{proof of lemma vector case high probability}) that
\begin{align}\label{eqn:proof.probability.statement}
\mathbb{P}\big(\vert \sum_{i \in J'}[\pi(r+1)]_i[\nabla \widehat{f}(Q(q),S_i)]_k - [\nabla \bar{f}(Q(q))]_k \vert \leq \epsilon'\big)
\end{align}
converges to $1$ uniformly for all $(r,k)$ (equivalently, all $q$) for any $\epsilon' > 0$. We therefore have with probability $1$ (as $N \rightarrow \infty$)
\begin{align}\label{epsilon square}
    &\Big(\sum\limits_{i=1}^{\vert J'\vert}[\pi(r+1)]_i [\nabla \widehat{f}(Q(q),S_i)]_k\Big)^2 \leq [\nabla\bar{f}(Q(q))]_k^2 + \epsilon'^2\nonumber\\
    &\qquad\qquad\qquad\qquad\qquad\qquad\quad+ 2\epsilon'\Big\vert[\nabla\bar{f}(Q(q))]_k\Big\vert.
\end{align}
Next, we consider two cases: ($i$) $[\nabla\bar{f}(Q(q))]_k > 0$, and ($ii$) $[\nabla\bar{f}(Q(q))]_k < 0$. When $[\nabla\bar{f}(Q(q))]_k > 0$, we have in probability $\sum_{i \in J'}[\pi(r+1)]_i [\nabla \widehat{f}(Q(q),S_i)]_k \geq [\nabla\bar{f}(Q(q))]_k -\epsilon'$. This fact along with \eqref{epsilon square}, the realization that $\frac{L\eta(q)^2}{2}-\eta(q)<0$ for large enough $q$ since $\eta(q) \rightarrow 0$, and some tedious but straightforward algebraic manipulations show that the following condition is sufficient for \eqref{eqn:lemma Q converge.condition} to hold in probability:
\begin{align}\label{condition combined}
    &\Big\vert[\nabla\bar{f}(Q(q))]_k\Big\vert >\nonumber\\
    & \bigg(\frac{2\epsilon'\big\vert[\nabla\bar{f}(Q(q))]_k\big\vert}{2-{L\eta(q)}} + \frac{2\big\vert[E(q)]_k\big\vert\big\vert[\nabla\bar{f}(Q(q))]_k\big\vert}{2\eta(q)-L\eta(q)^2}\nonumber\\
    &+ \frac{2L\big\vert[E(q)]_k\big\vert \big\vert\sum_{i\in J'}[\pi(r+1)]_i [\nabla \widehat{f}(Q(q),S_i)]_k\big\vert}{2-L\eta(q)}\nonumber\\
    &+ \frac{L\big\vert[E(q)]_k\big\vert^2}{2\eta(q)-L\eta(q)^2} + \frac{2L\eta(q)\epsilon'\big\vert [\nabla\bar{f}(Q(q))]_k \big\vert}{2-L\eta(q)} + \frac{L\eta(q)\epsilon'^2}{2-L\eta(q)}\bigg)^\frac{1}{2}.
\end{align}

Using similar arguments, one can also show that the case $[\nabla\bar{f}(Q(q))]_k < 0$ also results in \eqref{condition combined} as a sufficient condition for \eqref{eqn:lemma Q converge.condition} to hold in probability. We now note that $|[\nabla \bar{f}(\cdot)]_k|$ and $|[\nabla \widehat{f}(\cdot,\cdot)]_k|$ in \eqref{condition combined} can be upper bounded by some constants $L_{\bar{\nabla}}$ and $L_\nabla$ by virtue of Assumption~\ref{assumption lipschitz} and the definition of $W$. This results in the following sufficient condition for \eqref{eqn:lemma Q converge.condition}:
\begin{align}\label{condition combined.2}
    &\Big\vert[\nabla\bar{f}(Q(q))]_k\Big\vert > \bigg(\frac{2L_{\bar{\nabla}}\epsilon' + 2LL_{\bar{\nabla}}\eta(q)\epsilon' + L\eta(q)\epsilon'^2}{2-{L\eta(q)}}\nonumber\\
    &+\frac{2LL_\nabla\big\vert[E(q)]_k\big\vert }{2-L\eta(q)} + \frac{2L_{\bar{\nabla}}\big\vert[E(q)]_k\big\vert + L\big\vert[E(q)]_k\big\vert^2}{2\eta(q)-L\eta(q)^2}\bigg)^\frac{1}{2}.
\end{align}
The right-hand side of \eqref{condition combined.2} can be made arbitrarily small (and, in particular, equal to $\epsilon_\nabla$) through appropriate choice of $\epsilon'$ and large enough $q$; indeed, we have from our assumptions, Theorem~\ref{lemma consensus}, and the definitions of $\eta(q)$ and $E(q)$ that both $[E(q)]_k$ and $[E(q)]_k/\eta(q)$ converge to $0$ as $q \rightarrow \infty$.

This completes the proof, except that we need to validate one remaining claim and discuss one assumption. The claim is that $\bar{f}(Q(q))$ cannot converge when $\vert[\nabla\bar{f}(Q(q))]_k\vert >\epsilon_\nabla$. We prove this by contradiction. Suppose $\exists k \in \{1,\dots,P\}$ and $\epsilon_0 > 0$ such that $\vert[\nabla\bar{f}(Q(q))]_k\vert - \epsilon_\nabla > \epsilon_0$ for all $q$. We know $\exists q_0 \in \mathbb{N}$ such that the right hand side of \eqref{condition combined.2} becomes smaller than $\epsilon_\nabla$ for all $q \geq q_0$. Therefore, adding $\epsilon_0$ to the right hand side of \eqref{condition combined.2} and combining with \eqref{eqn:lemma Q converge.3} gives $\forall q \geq q_0$:
\begin{align}\label{eqn: contradiction}
\bar{f}(Q(q))-\bar{f}(Q(q+1)) \geq (2\eta(q)-L\eta(q)^2)(\epsilon_0^2+2\epsilon_\nabla\epsilon_0).
\end{align}
Taking summation on both sides of \eqref{eqn: contradiction} from $q=q_0$ to $\infty$, and noting that $\sum_{q=q_0}^{\infty} \eta(q) = \infty$ and $\sum_{q=q_0}^{\infty} \eta(q)^2 < \infty$, gives us $\bar{f}(Q(q_0)) - \lim_{q \to \infty} \bar{f}(Q(q)) = \infty$. This contradicts the fact that $\bar{f}(\cdot)$ is lower bounded, thereby validating our claim.

Finally, the assumption $\exists q_0' \in \mathbb{N}:\forall q \geq q_0', Q(q)\in W$ is true with probability $1$ (as $N\to\infty$) by virtue of the facts that $W$ is defined in terms of the sublevel set of $\bar{f}(\cdot)$, \eqref{eqn:proof.probability.statement} holds $\forall q < q_0'$ without requiring the assumption, $\exists q_0 \in \mathbb{N}$ such that $\bar{f}(Q(q))$ is monotonic in $q$ for all $q \geq q_0$ due to \eqref{eqn:proof.probability.statement}, and the probabilistic ``onion peeling'' induction argument at the end of the proof of Theorem~\ref{coordinate descent theorem} (cf.~Appendix~\ref{proof.byrdie.convergence.caseI}) is applicable in this case also (except that one will have to start the argument from some index $q = q_0' \geq q_0$).\qed

%***************
%***************
\balance
% Generated by IEEEtran.bst, version: 1.14 (2015/08/26)

\end{document}